\definecolor{codegreen}{rgb}{0,0.6,0}
\definecolor{codegray}{rgb}{0.5,0.5,0.5}
\definecolor{codepurple}{rgb}{0.58,0,0.82}
\definecolor{backcolour}{rgb}{0.95,0.95,0.92}
\lstdefinestyle{mystyle}{
    backgroundcolor=\color{backcolour},   
    commentstyle=\color{codegreen},
    keywordstyle=\color{magenta},
    numberstyle=\tiny\color{codegray},
    stringstyle=\color{codepurple},
    basicstyle=\ttfamily\footnotesize,
    breakatwhitespace=false,         
    breaklines=true,                 
    captionpos=b,                    
    keepspaces=true,                 
    numbers=left,                    
    numbersep=5pt,                  
    showspaces=false,                
    showstringspaces=false,
    showtabs=false,                  
    tabsize=2
}
\def\eqref#1{equation~\ref{#1}}
\def\1{\bm{1}}
\def\rvtheta{{\bm{\theta}}}
\def\vphi{{\bm{\phi}}}
\def\rvTheta{{\bm{\Theta}}}
\def\rvu{{\mathbf{i}}}
\def\rvu{{\mathbf{u}}}
\def\rvw{{\mathbf{B}}}
\def\rvx{{\mathbf{x}}}
\def\vzero{{\bm{0}}}
\def\law{{\mathrm{Law}}}
\def\vone{{\bm{1}}}
\def\vtheta{{\bm{\theta}}}
\def\vb{{\bm{u}}}
\def\vu{{\bm{u}}}
\def\vv{{\bm{v}}}
\def\vx{{\bm{x}}}
\def\vy{{\bm{y}}}
\def\vz{{\bm{z}}}
\def\mA{{\bm{A}}}
\def\mW{{\bm{W}}}
\def\mX{{\bm{X}}}
\def\mZ{{\bm{Z}}}
\def\mPhi{{\bm{\Phi}}}
\DeclareMathAlphabet{\mathsfit}{\encodingdefault}{\sfdefault}{m}{sl}
\SetMathAlphabet{\mathsfit}{bold}{\encodingdefault}{\sfdefault}{bx}{n}
\def\gF{{\mathcal{F}}}
\def\gL{{\mathcal{L}}}
\def\gN{{\mathcal{N}}}
\def\gO{{\mathcal{O}}}
\def\gU{{\mathcal{U}}}
\def\gZ{{\mathcal{Z}}}
\def\sI{{\mathbb{I}}}
\def\sP{{\mathbb{P}}}
\def\sQ{{\mathbb{Q}}}
\def\sR{{\mathbb{R}}}
\def\sS{{\mathbb{S}}}
\newcommand{\bigCI}{\mathrel{\text{\scalebox{1.07}{$\perp\mkern-10mu\perp$}}}}
\newcommand{\E}{\mathbb{E}}
\newcommand{\W}{\mathbb{W}}
\renewcommand{\R}{\mathbb{R}}
\def\calD{{\mathcal{D}}}
\let\log\relax
\DeclareMathOperator{\log}{ln}
\def\Q{{\mathbb{Q}}}
\def\I{{\mathbb{I}_d}}
\def\ind{{\bm{1}}}
\def\R{{\mathbb{R}}}
\newcommand{\KL}{D_{\mathrm{KL}}}
\DeclareMathOperator*{\argmin}{arg\,min}
\DeclareMathOperator*{\arginf}{arg\,inf}
\DeclareMathOperator*{\logsumexp}{logsumexp}
\newcommand{\psimplex}[1]{\Delta_{#1}}
\newcommand{\bY}{\bm{Y}}
\newcommand{\bM}{\bm{M}}
\newcommand{\bA}{\bm{A}}
\newcommand{\by}{\bm{y}}
\newcommand{\bmm}{\bm{m}}
\newcommand{\ba}{\bm{a}}
\newcommand{\dd}{\mathrm{d}}
\theoremstyle{thmstyleone}%
\newtheorem{theorem}{Theorem}
\theoremstyle{thmstyleone}%
\newtheorem{remark}{Remark}%
\newtheorem{observation}{Observation}%
\newtheorem{corollary}{Corollary}
\newtheorem{lemma}{Lemma}
\theoremstyle{thmstyleone}%
\newtheorem{definition}{Definition}%
\newtheorem*{rep@theorem}{\rep@title}
\newcommand{\newreptheorem}[2]{%
\newenvironment{rep#1}[1]{%
 \def\rep@title{#2 \ref{##1}}%
 \begin{rep@theorem}}%
 {\end{rep@theorem}}}
\newcommand\blfootnote[1]{%
  \begingroup
  \renewcommand\thefootnote{}\footnote{#1}%
  \addtocounter{footnote}{-1}%
  \endgroup
}
\begin{document}

\title[Bayesian Learning via N-SFS]{Bayesian Learning via Neural Schrödinger-Föllmer Flows } 


\author*[1]{\fnm{Francisco} \sur{Vargas}}\email{fav25@cam.ac.uk}

\author[2]{\fnm{Andrius} \sur{Ovsianas}}\email{ao464@cam.ac.uk}

\author[4]{\fnm{David} \sur{Fernandes}}\email{dlf28@bath.ac.uk}

\author[2]{\fnm{Mark} \sur{Girolami}}\email{mag92@cam.ac.uk}

\author[1]{\fnm{Neil D.} \sur{Lawrence}}\email{ndl21@cam.ac.uk}

\author[4]{\fnm{Nikolas} \sur{Nüsken}}\email{nuesken@uni-potsdam.de}

\affil*[1]{\orgdiv{Department of Computer Science}, \orgname{Cambridge University}, \orgaddress{\city{Cambridge}, \postcode{CB3 0FD}, \country{UK}}}

\affil[2]{\orgdiv{Department of Engineering}, \orgname{Cambridge University}, \orgaddress{\city{Cambridge}, \postcode{CB2 1PZ}, \country{UK}}}

\affil[3]{\orgdiv{Department of Computer Science}, \orgname{University Of Bath}, \orgaddress{\city{Bath}, \postcode{Bath BA2 7PB}, \country{UK}}}

\affil[4]{\orgdiv{Institute of Mathematics}, \orgname{University of Potsdam}, \orgaddress{ \city{Potsdam}, \postcode{14476}, \country{Germany}}}








\abstract{In this work we explore a new framework for approximate Bayesian inference in large datasets based on stochastic control. We advocate stochastic control as a finite time and low variance alternative to popular steady-state methods such as stochastic gradient Langevin dynamics (SGLD). Furthermore, we discuss and adapt the existing theoretical guarantees of this framework and establish connections to already existing VI routines in SDE-based models.}

\keywords{Schrödinger Bridge Problem, Föllmer Drift, Stochastic Control, Bayesian Inference, Bayesian Deep Learning.}

\maketitle
%

%






\section{Introduction}\label{sec:intro}

Steering\blfootnote{Published at Statistics and Computing, 2022.} a stochastic flow from one distribution to another across the space of probability measures is a well-studied problem initially proposed in \citet{schrodinger1932theorie}. There has been recent interest in the machine learning community in these methods for generative modelling, sampling, dataset imputation and optimal transport \citep{wang2021deep,de2021diffusion,huang2021schrodinger,bernton2019schr,vargasshro2021, chen2022likelihood,cuturi2013sinkhorn,maoutsa2021deterministic,reich2019data}.

We consider a particular instance of the Schrödinger bridge problem (SBP), known as the Schrödinger-Föllmer process (SFP). In machine learning, this process has been proposed for sampling and generative modelling \citep{huang2021schrodinger,tzen2019theoretical} and in molecular dynamics for rare event simulation and importance sampling \citep{hartmann2012efficient,hartmann2017variational}; here we apply it to Bayesian inference. We show that a control-based formulation of the SFP has deep-rooted connections to variational inference and is particularly well suited to Bayesian inference in high dimensions. This capability arises from the SFP's characterisation as an optimisation problem and its parametrisation through neural networks \citep{tzen2019theoretical}. Finally, due to the variational characterisation that these methods possess, many low-variance estimators \citep{richter2020vargrad, nusken2021solving,roeder2017sticking,xu2021infinitely} are applicable to the SFP formulation we consider. 

We reformulate the Bayesian inference problem by constructing a stochastic process $\rvTheta_t$ which at a fixed time $t=1$ will generate samples from a pre-specified posterior $p(\vtheta  \vert   \mX)$, i.e. $\law \rvTheta_1 =p(\vtheta  \vert   \mX) $, with dataset $\mX=\{\rvx_i\}_{i=1}^N$, and where the model is given by:
\begin{align}
 \rvtheta &\sim p(\rvtheta),\nonumber \\
     \rvx_i \vert   \rvtheta &\sim p(\vx_i \vert   \rvtheta). \quad \mathrm{iid}\label{eq:bayes_inf}
 \end{align}
Here the prior $p(\rvtheta)$ and the likelihood $p(\rvx_i \vert   \rvtheta)$ are user-specified. Our target is $\pi_1(\rvtheta) = \frac{p(\mX\vert   \rvtheta)p(\rvtheta)}{\gZ}$,
where $\gZ = \int \prod_i p(\vx_i \vert   \rvtheta)p(\rvtheta) d\vtheta$.
This formulation is reminiscent of the setup proposed in the previous works \citep{grenander1994representations,roberts1996exponential, girolami2011riemann, welling2011bayesian} and covers many Bayesian machine-learning models, but our formulation has an important difference. SGLD relies on a diffusion that reaches the posterior as its equilibrium state when time approaches infinity. In contrast, our dynamics are \emph{controlled} and the posterior is reached in finite time (bounded time). The benefit of this property is elegantly illustrated in Section 3.2 of \cite{huang2021schrodinger} where they rigorously demonstrate that even under an Euler approximation the proposed approach reaches a Gaussian target at time $t\!=\!1$ whilst SGLD does not.\\
 
\noindent \textbf{Contributions:} The main contributions of this work can be detailed as follows:
 \begin{itemize}
 \itemsep0em 
     \item In this work we scale and apply the theoretical framework proposed in \citep{dai1991stochastic,tzen2019neural} to sample from posteriors in large scale Bayesian machine learning tasks such as Bayesian Deep learning.  We study the robustness of the predictions under this framework as well as evaluate their uncertainty quantification.
     \item More precisely we propose an amortised parametrisation that allows scaling  models with local and global variables to large datasets.
     \item We explore and provide further theoretical backing (Section \ref{subsec:stl}) to the ``sticking the landing'' estimator provided by \cite{xu2021infinitely}.
     \item Overall we empirically demonstrate that the stochastic control framework offers a promising direction in Bayesian machine learning, striking the balance between theoretical/asymptotic guarantees found in MCMC methods \citep{hastings1970monte,duane1987hybrid,neal2011mcmc,brooks2011handbook} and more practical approaches such as variational inference \citep{blei2003latent}.
 \end{itemize}

\begin{figure*}[t!]
    \centering
    \begin{minipage}{0.3\linewidth }
    \centering
    \includegraphics[width=\linewidth ]{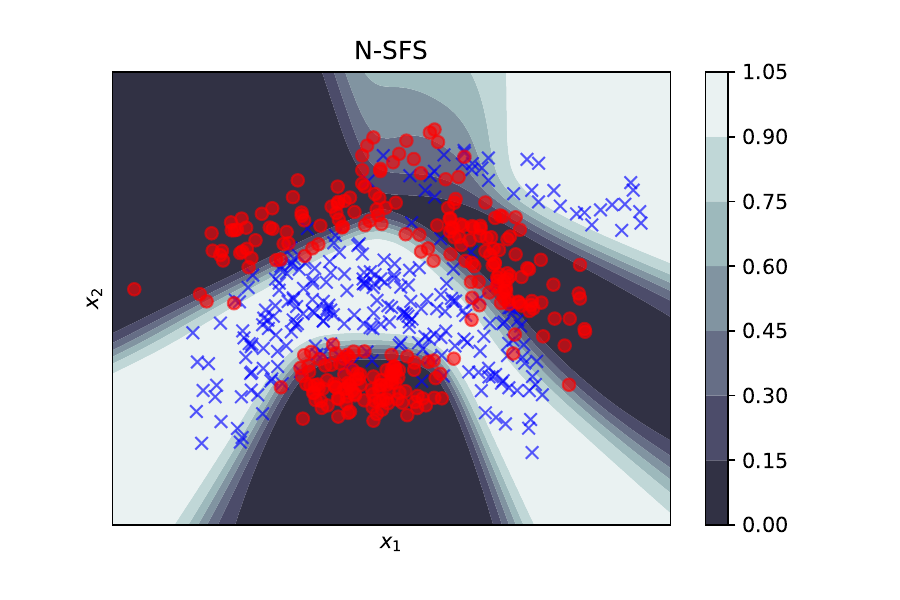}
    \end{minipage}
    \hspace*{\fill}\begin{minipage}{0.3\linewidth }
    \centering
    \includegraphics[width=\linewidth ]{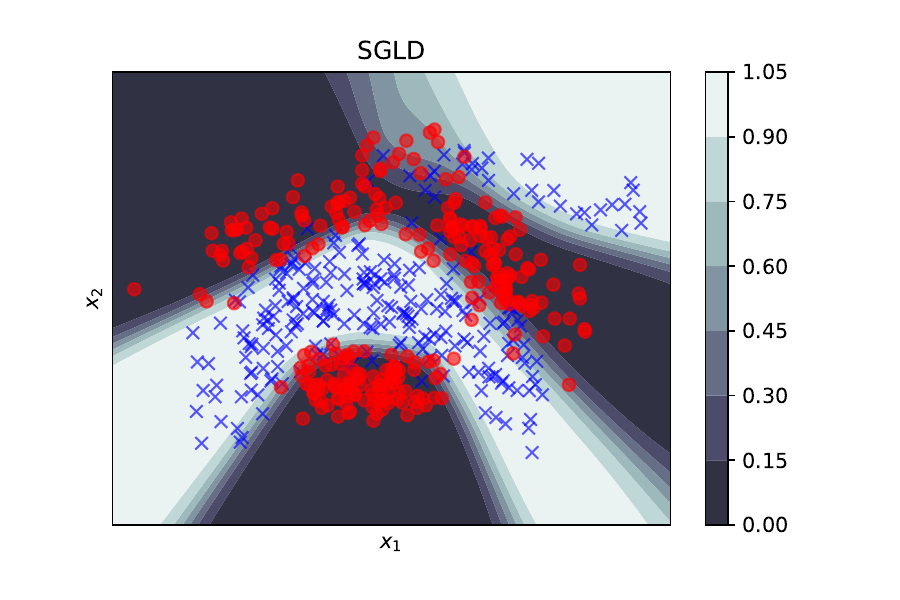}
    \end{minipage}
    \hspace*{\fill}\begin{minipage}{0.3\linewidth }
    \centering
    \includegraphics[width=\linewidth ]{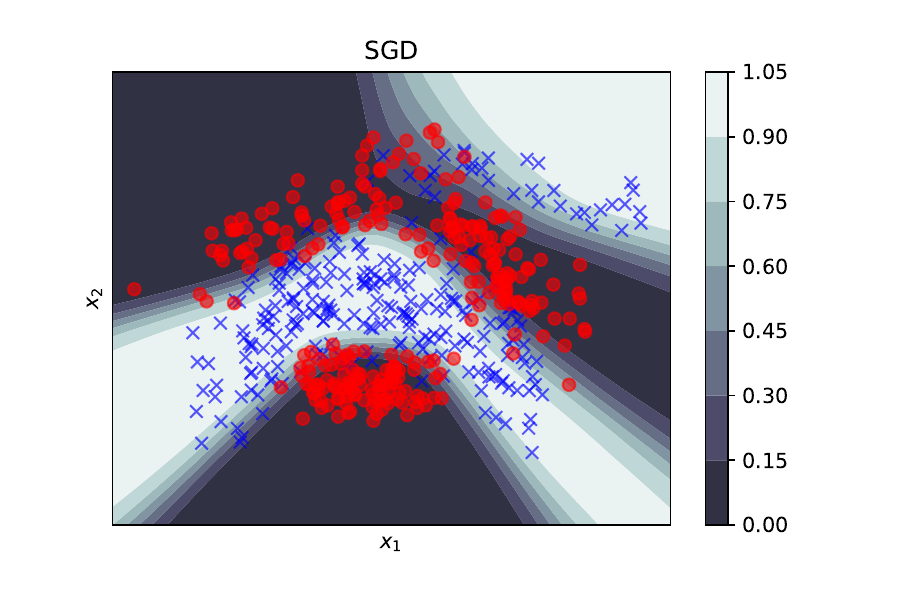}
    \end{minipage}
    \caption{Predictive posterior contour plots on the banana dataset \citep{diethe13benchmark}. Test accuracies:  $0.8928 \pm0.0056, 0.8913\pm 0.0105, 0.8800\pm0.0063$ and test ECEs: $ 0.0229\pm0.0062 ,  0.0253   \pm 0.0042, 0.0267 \pm 0.0083$ for N-SFS, SGLD, and SGD respectively. We observe that N-SFS obtains the highest test accuracy whilst preserving the lowest ECE.} \label{fig:banana}
\end{figure*}

\subsection{Notation}

Throughout the paper we consider path measures (denoted as $\Q$ or $\sS$) on the 
space of continuous 
functions $\Omega = C([0, 1], \sR^d)$. Random processes associated with such path
measures $\Q$ are denoted as $\rvTheta$ and their time-marginal distributions as
$\Q_t = (\rvTheta_t)_\#\Q$ (which are just pushforward measures). Given two
marginal distributions $\pi_0$ and $\pi_1$ we write $\calD(\pi_0, \pi_1) = \{\Q : \Q_0 = \pi_0, \Q_1 = \pi_1 \}$
for the set of all path measures with given marginal distributions at the initial
and final times. We denote by $\Q^{\rvu, \pi}$ the path measure of
the following Stochastic Differential Equation (SDE):
\begin{equation}\label{eq:sde1}
    d \rvTheta_t = \rvu(t, \rvTheta_t)d t + \sqrt{\gamma} d \rvw_t, \;\;\; \rvTheta_0 \sim \pi
\end{equation}\vspace{-2em}

\noindent
(we drop the dependence on $\gamma$ since it is fixed) and we write 
$\W^\gamma = \Q^{0, \delta_0}$ for the Wiener measure. We will write 
$\frac{d\Q}{d\sS}$ for the Radon-Nikodym derivative (RND) of $\Q$ w.r.t. $\sS$.

\subsection{Schrödinger-Föllmer Processes}

\begin{definition}
    (Schrödinger-Bridge Process) Given a reference process $\sS$ 
    and two measures $\pi_0$ and $\pi_1$ the Schrödinger bridge distribution is 
    given by

    \begin{equation}
        \Q^* = \arginf_{\Q \in \calD(\pi_0, \pi_1)} \KL\left(\Q \big\vert  \big\vert   \sS \right),
    \end{equation}
    where $\sS$ acts as a ``prior''. 

    It is known \citep{leonard2013survey} that if $\sS = \Q^{\rvu, \pi}$, 
    $\Q^*$ is induced by an SDE with a modified drift:

    \begin{equation}
            \dd\rvTheta_t = \rvu^*(t, \rvTheta_t)\dd t +  \sqrt{\gamma} \dd \rvw_t, \;\;\; \rvTheta_0 \sim \pi_0,
    \end{equation}

    i.e. $\Q^* = \Q^{\rvu^*, \pi_0}$. Solution of this SDE is called the 
    Schr{\"o}dinger-Bridge Process (SBP). 
\end{definition}

\begin{definition}
    (Schrödinger-Föllmer Process) The SFP is an SBP where $\pi_0=\delta_0$ and
    the reference process $\sS = \W^\gamma$ is the Wiener measure.
\end{definition}
 
The SFP differs from the general SBP in that, rather than constraining the initial 
distribution to $\delta_0$, the SBP considers \emph{any} initial distribution 
$\pi_0$. The SBP also involves general Itô SDEs associated with $\Q^{\vb, \pi}$ 
as the dynamical prior, compared to the SFP which restricts attention to Wiener 
processes as priors.
 
The advantage of considering this more limited version of the SBP is that it admits 
a closed-form characterisation of the solution to the Schrödinger system 
\citep{leonard2013survey,wang2021deep,pavon2018data} which allows for an 
unconstrained formulation of the problem. For accessible introductions to the SBP 
we suggest \citep{pavon2018data,vargasshro2021}. Now we will consider instances 
of the SBP and the SFP where $\pi_1=p(\vtheta  \vert   \mX)$.

\subsubsection{Analytic Solutions and the Heat Semigroup}

Prior work \citep{pavon1989stochastic,dai1991stochastic,tzen2019theoretical,huang2021schrodinger} has explored the properties of SFPs via a closed form formulation of the Föllmer drift expressed in terms of expectations over Gaussian random variables known as the heat semigroup. The seminal works  \citep{pavon1989stochastic,dai1991stochastic,tzen2019theoretical} highlight how this formulation of the Föllmer drift characterises an exact sampling scheme for a target distribution and how it could potentially be used in practice. The recent work by \cite{huang2021schrodinger} builds on \cite{tzen2019theoretical} and explores estimating the optimal drift in practice via the heat semigroup formulation using a Monte Carlo approximation. Our work aims to take the next step and scale the estimation of the Föllmer drift to high dimensional cases \citep{graves2011practical, hoffman2013stochastic}. In order to do this we must move away from the heat semigroup and instead consider the dual formulation of the Föllmer drift in terms of a stochastic control problem \citep{tzen2019theoretical}.

In the setting when $\pi_0=\delta_0$  we can express the optimal SBP drift as follows
\begin{align}
    \label{eq:optimal drift}
    \rvu^*(t, \vx) = \nabla_\vx\ln\E_{\rvTheta \sim \sS}\left[\frac{d\pi_1}{d \sS_1}(\rvTheta_1)\Big\vert   \rvTheta_t =  \vx\right]
\end{align}

\begin{definition}
    The Euclidean heat semigroup $Q_t^\gamma, \; t\geq 0$, acts on bounded measurable functions $f: \sR^d \rightarrow \sR$ as $Q^\gamma_t f(\vx) = \int_{\sR^d} f\left(\vx+\sqrt{t}\vz\right) \gN(\vz \vert  \vzero, \gamma\sI) d\vz  =\E_{\vz \sim \gN(\vzero, \gamma\sI)}\left[f\left(\vx+\sqrt{t}\vz\right)\right].$
\end{definition}
In the SFP case where $\sS=\W^\gamma$, the optimal drift from 
Equation \ref{eq:optimal drift} can be written in terms of the heat 
semigroup, $\rvu^*(t, \vx) = \nabla_\vx\ln Q^\gamma_{1-t} \left[\frac{d \pi_1}{d\gN(\vzero, \gamma\sI)}(\vx)\right]$.
Note that an SDE with the heat semigroup induced drift

\begin{equation}
    d\rvTheta_t = \nabla_{\rvTheta_t}\ln Q^\gamma_{1-t} \left[\frac{d\pi_1}{d\gN(\vzero, \gamma\sI)}(\rvTheta_t )\right] d t + \sqrt{\gamma} d\rvw_t
\end{equation}
satisfies $ \law \rvTheta_1 = \pi_1$, that is, at $t=1$ these processes are distributed 
according to our target distribution of interest $\pi_1$.

\subsubsection{Schrödinger-Föllmer Samplers}

\cite{huang2021schrodinger} carried out preliminary work on empirically exploring the success of using the heat semigroup formulation of SFPs in combination with the Euler-Mayurama (EM) discretisation to sample from target distributions in a method they call Schrödinger-Föllmer samplers (SFS). More precisely the SFS approach proposes estimating the Föllmer drift via:

\begin{align} \label{eq:sfs_est}
     \hat{\rvu}^*(t, \vx) = \frac{\frac{1}{S}\sum_{s=1}^S \vz_s f(\vx + \sqrt{1-t} \vz_s)}{\frac{\sqrt{1-t}}{S}\sum_{s=1}^S f(\vx + \sqrt{1-t} \vz_s)},
\end{align}
where $\vz_s \sim \gN(\vzero, \gamma \sI)$ and $f=\frac{d\pi_1}{d\gN(\vzero, \gamma\sI)}$. Whilst this estimator enjoys sound theoretical properties \citep{huang2021schrodinger} it falls short in practice for the following reasons:
\begin{figure*}
    \centering
    \includegraphics[width=\textwidth]{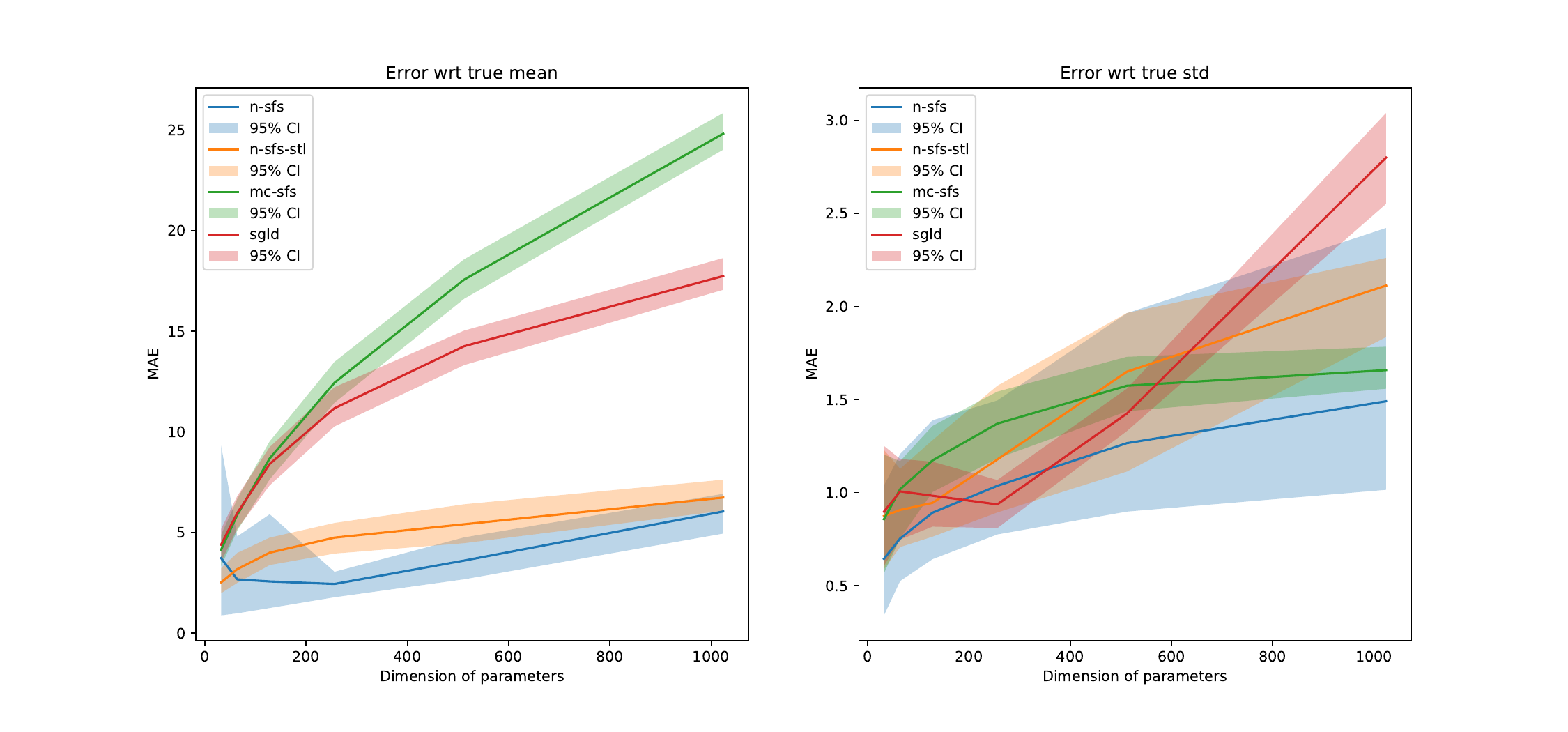}
    \caption{Comparison between MC-SFS and N-SFS under similar computational constraints. Target distribution is the Gaussian posterior induced by a Bayesian linear regression model, we plot the error of the first and second posterior predictive moments between the true posterior predictive and the listed approximations. We found increasing the number of steps in SGLD drove the errors closer to 0 however when increasing the dimensions this threshold also increased notably. This illustrates the advantages of having a target at a finite time rather than at equilibrium.}
    \label{fig:mc-sfs}
\end{figure*}
\begin{itemize}
    \item The term $f$ involves the product of PDFs evaluated at samples rather than a log product and is thus often very unstable numerically. In Appendix \ref{apdx:sfs} we provide a more stable implementation of Equation \ref{eq:sfs_est} exploiting the logsumexp trick and properties of the Lebesgue integral. 
    \item In it's current form the estimator does not admit low variance estimators (e.g. Variatonal Inference), being a Monte Carlo estimator it is prone to high variance.
    \item Both empirically and theoretically we found the computational running time of the above approach to be considerably slower than the other methods we compare to. At test time SFS has a computational complexity of $\gO(T S\#_f(d) )$ where $T=\Delta t^{-1}$ , $S$ is the number of Monte Carlo samples and $\#_f(d)$ is the cost of evaluating the RND $f$ which at best is linear in $d$. Meanwhile our proposed approach enjoys a cost of $\gO(T \#_{\vu_\phi}(d) )$ where $\#_{\vu_\phi}(d)$ is the forward pass through a neural network approximating the Föllmer drift.
\end{itemize}

In practice we found this implementation to be too numerically unstable and unable to produce reasonable results even in low dimensional examples in order to carry out a fair comparison we reformulated Equation \ref{eq:sfs_est} stably, the stable formulation and its derivation can be found in Appendix \ref{apdx:sfs}.

In this work build on \cite{huang2021schrodinger} by considering a formulation of the Schrödinger-Föllmer process that is suitable for the high dimensional settings arising in Bayesian ML. Our work will focus on a dual formulation of the optimal drift that is closer to variational inference and thus admits the scalable and flexible parametrisations used in ML.
\begin{algorithm*}
\caption{Optimization of N-SFS with Stochastic Mini-batches.}\label{alg:svi}
\begin{algorithmic}[1]
\State {\textbf{Input: }data set $\mX\!=\!\{\rvx_i\}_{i=1}^N$, initialized drift NN $\vu_\phi$, parameter dimension $d$, \# of iterations $M$, batch size $B$, \# of EM discretization steps $k$, \# of MC samples $S$ , diffusion coefficient $\gamma$.}
\State{\textbf{Initialise: }$\Delta t \gets \frac{1}{k}$,  $\;\;t_j \gets j\Delta t$ for all $j = 0, \dots, k$}

\For{$i = 1, \dots, M$}
    \State{Initialize $\rvTheta_0^{s} \gets 0 \in \R^d$ for all $s = 1, \dots, S$}
    \State{$\{\Theta_j^{s\phi}\}_{j=1}^k \gets \text{Euler-Maruyama}(\vu_\phi, \rvTheta_0^{s}, \Delta t)$ for all $s = 1, \dots, S$}
    \State{Sample $\rvx_{r_1}, \dots, \rvx_{r_B} \sim \mX$}
    \State{ $g \!\gets \nabla_\phi \left(\frac{1}{S}\sum\limits_{s=1}^S\sum\limits_{j=0}^k \left(\!\!\vert  \vert  \vu_\phi(\rvTheta_j^{s\phi}, t_j)\vert  \vert  ^2\Delta t \!- \ln\!\left(\!\frac{p(\rvTheta_k^{s\phi} )}{\gN(\rvTheta_k^{s\phi}\vert  \bm{0}, \gamma \I)}\!\!\right) \!+\! \frac{N}{B}\sum\limits_{j=1}^B \ln p(\rvx_{r_j}\vert  \rvTheta_k^{s\phi})\right)\!\!\right)$ }
    \State{$\phi \gets \text{Gradient Step}(\phi, g)$}
\EndFor
\State{\textbf{Return: } $\vu_\phi$}
\end{algorithmic}
\end{algorithm*}

\section{Stochastic Control Formulation} \label{sec:csfp}

In this section, we introduce a particular formulation of the Schrödinger-Föllmer process in the context of the Bayesian inference problem in Equation \ref{eq:bayes_inf}. In its most general setting of sampling from a target distribution, this formulation was known to \cite{dai1991stochastic}. \cite{tzen2019theoretical} study the theoretical properties of this approach in the context of generative models \citep{kingma2021variational,goodfellow2014generative}, finally \cite{opper2019variational} applies this formulation to time series modelling. In contrast our focus is on the estimation of a Bayesian posterior for a broader class of models than \citeauthor{tzen2019theoretical} explore.

\begin{corollary}\label{col:main}
    Define

    \begin{equation*}
        \gF_{\mathrm{DET}}(\rvu, \rvtheta) = \frac{1}{2\gamma}\int_0^1\|\rvu(t, \rvtheta_t)\|  ^2 dt - \ln\frac{ p(\mX \vert   \rvtheta_1)p(\rvtheta_1)}{\gN(\rvtheta_1\vert  \bm{0}, \gamma \I)}
    \end{equation*}

    \begin{equation*}
        J(\rvu) = \E_{\rvTheta \sim \Q^{\rvu, \delta_0}}\left[\gF_{\mathrm{DET}}(\rvu, \rvTheta)\right]
    \end{equation*}

    Then the minimiser (with $\gU$ being the set of admissible controls\footnote{Under appropriate conditions on the model in Equation \ref{eq:bayes_inf}, $\mathcal{U}$ can be taken to be the set of $C^1$-vector fields with linear growth in space, see \cite{nusken2021solving}.})
    \begin{equation}
        \rvu^{*}\!=\!\argmin_{\rvu \in \gU} J(\rvu) \label{eq:CSFP}
    \end{equation}
    satisfies $\Q_1^{\gamma, \rvu^*, \delta_0} = \frac{p(\mX\vert   \rvtheta)p(\rvtheta)}{\gZ}d\rvtheta$.

  Moreover, $\rvu^{*}$ solves the SFP with $\pi_1=p(\rvtheta\vert   \mX)$.

\end{corollary}
The objective in Equation \ref{eq:CSFP} can be estimated using an SDE discretisation, such as the EM method. Since the drift $\rvu^{*}$ is Markov, it can be parametrised by a flexible function estimator such as a neural network, as  in \cite{tzen2019theoretical}. In addition, unbiased estimators for the gradient of objective in~\eqref{eq:CSFP} can be formed by subsampling the data. In this work we will refer to the above formulation of the SFP as the Neural Schrödinger-Föllmer sampler (N-SFS) when we parametrise the drift with a neural network and  implement unbiased mini-batched estimators for this objective (Appendix \ref{apdx:svi}). This formulation of SFPs has been previously studied in the context of generative modelling / marginal likelihood estimation \citep{tzen2019theoretical}, while we focus on Bayesian inference. 

We note that recent concurrent work \citep{zhang2022path} \footnote{This work was made public on arxiv within a month of our arxiv pre-print release.} proposes an algorithm akin to ours based on \cite{dai1991stochastic,tzen2019theoretical}, however their focus is on estimating the normalising constant of unnormalised densities, while ours  is on Bayesian ML tasks such as Bayesian regression, classification and LVMs, thus our work leads to different insights and algorithmic motivations.

\subsection{Theoretical Guarantees for Neural SFS}

While the focus in \cite{tzen2019theoretical} is in providing guarantees for generative models of the form $ \vx \sim  q_\phi(\vx \vert   \mZ_1)\;,d\mZ_t = \vb_{\phi}(\mZ_t,t)dt + \sqrt{\gamma} d\rvw_t, \;\mZ_0 =\! 0,$
their results extend to our setting as they explore approximating the Föllmer drift for a generic target $\pi_1$. 

Theorem 4 in \citeauthor{tzen2019theoretical} (restated as Theorem \ref{thrm:tzen} in Appendix \ref{app:em}) motivates using neural networks to parametrise the drift in Equation \ref{eq:CSFP} as it provides a guarantee regarding the expressivity of a network parametrised drift via an upper bound on the target distribution error in terms of the size of the network.

We will now proceed to highlight how this error is affected by the EM discretisation:
\begin{corollary}\label{col:euler}
Given the network $\vv$ from Theorem \ref{thrm:tzen} it follows that the 
Euler-Maruyama discretisation of~\eqref{eq:sde1} with $\rvu=\vv$ 
induces an approximate target $\hat{\pi}^{\vv}_1$ that satisfies
\begin{align}
     \KL(\pi_1\vert  \vert   \hat{\pi}^{\vv}_1 ) \leq\left (\epsilon^{1/2} + \mathcal{O}(\sqrt{\Delta t}) \right)^2.
\end{align}
\end{corollary}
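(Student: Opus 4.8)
The idea is to lift everything to path space and combine Girsanov's theorem with the triangle inequality in $L^2$. Write $\P^{*}$ for the law on $C([0,1];\sR^d)$ of the exact Schr\"odinger--F\"ollmer process (drift $\rvu^{*}$, so that the time-$1$ marginal of $\P^{*}$ is exactly $\pi_1$), $\P^{\vv}$ for the law of the continuous-time SDE with the neural drift $\vv$ supplied by Theorem~\ref{thrm:tzen}, and $\hat\P^{\vv}$ for the law of its Euler--Maruyama interpolation with step $\Delta t$; all three are laws of It\^o processes started at $\delta_0$ with diffusion coefficient $\sqrt{\gamma}$. Since the $t=1$ marginals of $\P^{*}$ and $\hat\P^{\vv}$ are $\pi_1$ and $\hat\pi_1^{\vv}$, the data-processing inequality gives $\KL(\pi_1\,\|\,\hat\pi_1^{\vv})\le \KL(\P^{*}\,\|\,\hat\P^{\vv})$, so it is enough to bound the path-space divergence.

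First I would apply Girsanov: writing $\kappa(t)=\lfloor t/\Delta t\rfloor\Delta t$ for the left grid point, $\hat\P^{\vv}$ is the law of the It\^o process with adapted drift $\vv(\kappa(t),\rvTheta_{\kappa(t)})$, and $\P^{*}$ has drift $\rvu^{*}_t(\rvTheta_t)$, so
\[
  \KL\big(\P^{*}\,\|\,\hat\P^{\vv}\big)=\frac{1}{2\gamma}\,\E_{\P^{*}}\!\int_0^1\big\|\rvu^{*}_t(\rvTheta_t)-\vv\big(\kappa(t),\rvTheta_{\kappa(t)}\big)\big\|^2\,dt .
\]
Splitting the integrand as $\big(\rvu^{*}_t(\rvTheta_t)-\vv(t,\rvTheta_t)\big)+\big(\vv(t,\rvTheta_t)-\vv(\kappa(t),\rvTheta_{\kappa(t)})\big)$ and using the triangle inequality in $L^2(\P^{*}\otimes\mathrm{Leb})$,
\[
  \sqrt{2\gamma\,\KL(\P^{*}\|\hat\P^{\vv})}\;\le\;\Big(\E_{\P^{*}}\!\!\int_0^1\!\!\|\rvu^{*}_t(\rvTheta_t)-\vv(t,\rvTheta_t)\|^2 dt\Big)^{1/2}+\Big(\E_{\P^{*}}\!\!\int_0^1\!\!\|\vv(t,\rvTheta_t)-\vv(\kappa(t),\rvTheta_{\kappa(t)})\|^2 dt\Big)^{1/2}.
\]
By Girsanov again the first term is $\sqrt{2\gamma\,\KL(\P^{*}\|\P^{\vv})}$, and the drift-approximation guarantee underlying Theorem~\ref{thrm:tzen} (that is, $\tfrac{1}{2\gamma}\E_{\P^{*}}\!\int_0^1\|\rvu^{*}_t(\rvTheta_t)-\vv(t,\rvTheta_t)\|^2 dt=\KL(\P^{*}\|\P^{\vv})\le\epsilon$) bounds it by $\sqrt{2\gamma}\,\epsilon^{1/2}$.

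For the second, discretisation, term I would use the regularity of $\vv$: the network is Lipschitz in its spatial argument and, being a smooth function of $t$, Lipschitz in time on $[0,1]$, with constants inherited from the construction in Theorem~\ref{thrm:tzen}, so the summand is $\lesssim |t-\kappa(t)|^2+\|\rvTheta_t-\rvTheta_{\kappa(t)}\|^2$. Here $|t-\kappa(t)|\le\Delta t$, and by the It\^o isometry together with a linear-growth / second-moment a priori estimate for $\rvu^{*}$ along $\P^{*}$ (standard under the hypotheses of Theorem~\ref{thrm:tzen}) one gets $\sup_{t}\E_{\P^{*}}\|\rvTheta_t-\rvTheta_{\kappa(t)}\|^2=\mathcal{O}(\Delta t)$; hence the whole term is $\mathcal{O}(\Delta t)$ and its square root is $\mathcal{O}(\sqrt{\Delta t})$. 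Adding the two estimates, squaring, and using the data-processing step gives $\KL(\pi_1\|\hat\pi_1^{\vv})\le\KL(\P^{*}\|\hat\P^{\vv})\le(\epsilon^{1/2}+\mathcal{O}(\sqrt{\Delta t}))^2$.

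The main obstacle is the discretisation term: one must justify (i) that Girsanov/Novikov applies to the Euler interpolation and to $\P^*$, and more importantly (ii) the uniform one-step moment bound $\E_{\P^{*}}\|\rvTheta_t-\rvTheta_{\kappa(t)}\|^2=\mathcal{O}(\Delta t)$, which needs integrability and growth control of the exact F\"ollmer drift evaluated along its own trajectories. A further subtlety worth flagging is that the constant hidden in $\mathcal{O}(\sqrt{\Delta t})$ scales with the Lipschitz constant of $\vv$, which through Theorem~\ref{thrm:tzen} depends on $\epsilon$; the clean product form $(\epsilon^{1/2}+\mathcal{O}(\sqrt{\Delta t}))^2$ should be read with that dependence absorbed into the $\mathcal{O}$.
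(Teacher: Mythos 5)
Your proposal follows essentially the same route as the paper's proof: data processing from path space to marginals, Girsanov to express the path-wise KL as an $L^2$ distance between drifts, the triangle inequality to split off the network-approximation error (bounded by $\epsilon^{1/2}$ via Theorem~\ref{thrm:tzen}), and Lipschitz continuity of $\vv$ plus an $\mathcal{O}(\Delta t)$ second-moment estimate to handle the discretisation term. The only cosmetic difference is that you control the discretisation term via the one-step modulus of continuity $\E\|\rvTheta_t-\rvTheta_{\kappa(t)}\|^2=\mathcal{O}(\Delta t)$ of the exact process, whereas the paper invokes the strong Euler--Maruyama error $\E\max_t\|\rvTheta_t-\hat{\rvTheta}_t\|^2\le C\Delta t$ from \citet{gyongy1996existence}; both yield the same $\mathcal{O}(\sqrt{\Delta t})$ contribution, and your closing remark about the $\epsilon$-dependence of the hidden constant is a fair caveat the paper leaves implicit.
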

This result provides a bound of the error in terms of the depth $\Delta t^{-1}$ of the stochastic flow \citep{chen2022likelihood,zhang2021diffusion} and the size of the network that we parametrise the drift with. Under the view that NN parametrised SDEs can be interpreted as ResNets \citep{li2020scalable} we find that this result illustrates that increasing the ResNets' depth will lead to more accurate results.

\subsection{Sticking the Landing and Low Variance Estimators} \label{subsec:stl}

As with VI \citep{richter2020vargrad,roeder2017sticking}, the gradient of the objective in this study admits several low variance estimators \citep{nusken2021solving,xu2021infinitely}. In this section we formally recap what it means for an estimator to ``stick the landing'' and we prove that the estimator proposed in \citeauthor{xu2021infinitely} satisfies said property. 

The full objective being minimised in our approach is (where expectations are taken over $\rvTheta \sim \Q^{\rvu, \delta_0}$):
\begin{align}
  J(\rvu) =\E[&\gF_{\mathrm{DET}}(\rvu, \rvTheta)]\! \nonumber\\
          =\E[&\gF(\rvu, \rvTheta)]\! \nonumber\\
  =\!\E\!\Bigg[&\!\frac{1}{2\gamma}\!\!\int_0^1\!\!\!\!\vert  \vert  \rvu_t(\rvTheta_t)\vert  \vert  ^2 \!dt + \frac{1}{\sqrt{\gamma} }\int_0^1\!\!\!\!\rvu_t(\rvTheta_t)^\top  \!d\rvw_t  \nonumber \\
  & \!-\! \ln\!\Big(\!\frac{ p(\mX \vert   \rvTheta_1)p(\rvTheta_1)}{\gN(\rvTheta_1\vert  \bm{0}, \gamma \I)}\!\!\Big)\!\Bigg], 
\end{align}

noticing that in previous formulations we have omitted the It{\^o} integral as it has zero expectation (but the integral appears naturally through Girsanov's theorem). We call the estimator calculated by taking gradients of the above objective the relative-entropy estimator. The estimator proposed in \cite{xu2021infinitely} (Sticking the landing estimator) is given by:
\begin{align}
  J_{\mathrm{STL}}(\rvu) =\E[&\gF_{\mathrm{STL}}(\rvu, \rvTheta)]\! \nonumber\\
  =\!\E\!\Bigg[&\!\frac{1}{2\gamma}\!\!\int_0^1\!\!\!\!\vert  \vert  \rvu_t(\rvTheta_t)\vert  \vert  ^2 \!dt \!+ \!\frac{1}{\sqrt{\gamma} }\!\!\int_0^1\!\!\!\!\!\rvu^{\perp}_t(\rvTheta_t)^\top  \!\!d\rvw_t\nonumber \\ &\!-\! \ln\!\Big(\!\frac{ p(\mX \vert   \rvTheta_1)p(\rvTheta_1)}{\gN(\rvTheta_1\vert  \bm{0}, \gamma \I)}\!\!\Big)\!\Bigg], 
\end{align}
where $\perp$ means that the gradient is stopped/detached as in \cite{xu2021infinitely,roeder2017sticking}.

We study perturbations of $\gF$ around $\rvu^*$ by considering 
$\rvu^* + \varepsilon \vphi$, with $\vphi$ arbitrary, and $\varepsilon$ small. 
More precisely, we set out to compute (where dependence on $\rvtheta$ is dropped):
\begin{align}
\frac{\mathrm{d}}{\mathrm{d}\varepsilon}  \mathcal{F}(\rvu^* + \varepsilon \vphi)\Big\vert  _{\varepsilon = 0},
\end{align}
through which we define the definition of ``sticking the landing'':
\begin{definition}
\label{def:STL}
    We say that an estimator ``sticks the landing'' when
\begin{align}
    \frac{\mathrm{d}}{\mathrm{d}\varepsilon}  \mathcal{F}(\rvu^* + \varepsilon \vphi) \Big\vert  _{\varepsilon = 0}=0,
\end{align}
almost surely, for all smooth and bounded perturbations $\vphi$.
\end{definition}
Notice that by construction, $\vu^*$ is a global minimiser of $J$, and hence all directional derivatives vanish, 
\begin{equation}
\label{eq:zero exp}
    \frac{\mathrm{d}}{\mathrm{d}\varepsilon}  J(\rvu^* + \varepsilon \vphi) \Big\vert  _{\varepsilon = 0} = \frac{\mathrm{d}}{\mathrm{d}\varepsilon}  \mathbb{E}[\mathcal{F}(\rvu^* + \varepsilon \vphi, \rvTheta)] \Big\vert  _{\varepsilon = 0} =0.
\end{equation}
Definition \ref{def:STL} additionally demands  that this quantity is zero almost surely, and not just on average. Consequently, ``sticking the landing''-estimators will have zero-variance at $\rvu^*$.

\begin{remark}
The relative-entropy stochastic control estimator does not stick the landing.
\end{remark}
\begin{proof}
See \cite{nusken2021solving}, Theorem 5.3.1, clause 3, Equation 133 clearly indicates $\frac{\mathrm{d}}{\mathrm{d}\varepsilon}  \mathcal{F}(\rvu^* + \varepsilon \vphi) \Big\vert  _{\varepsilon = 0}\neq0$.
\end{proof}

We can now go ahead and prove that the estimator proposed by \cite{xu2021infinitely} does indeed stick the landing.

\begin{theorem}\label{thrm:stl}
The STL estimator proposed in \citep{xu2021infinitely} satisfies
\begin{align}
    \frac{\mathrm{d}}{\mathrm{d}\varepsilon}  \mathcal{F}(\rvu^* + \varepsilon \vphi) \Big\vert  _{\varepsilon = 0} =0,
\end{align}
almost surely, for all smooth and bounded perturbations $\vphi$.
\end{theorem}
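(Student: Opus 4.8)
The plan is to compute the directional derivative of $\gF_{\mathrm{STL}}$ at $\rvu^*$ directly, exploiting the fact that the detached It\^o term contributes nothing to the variation. Write $\rvu^\varepsilon = \rvu^* + \varepsilon\vphi$ and let $\rvTheta_t^\varepsilon$ denote the solution of the controlled SDE driven by $\rvu^\varepsilon$ with the \emph{same} Brownian path $\rvw_t$. First I would separate $\gF_{\mathrm{STL}}(\rvu^\varepsilon)$ into three pieces: the quadratic running cost $\frac{1}{2\gamma}\int_0^1 \|\rvu^\varepsilon(t,\rvTheta_t^\varepsilon)\|^2\,dt$; the detached stochastic integral $\frac{1}{\sqrt\gamma}\int_0^1 \rvu^{\perp,\varepsilon}(t,\rvTheta_t^\varepsilon)^\top d\rvw_t$, where by construction $\rvu^{\perp,\varepsilon}$ has its $\varepsilon$-dependence frozen to the value at $\varepsilon$ used only as an evaluation point but not differentiated; and the terminal term $-\ln\bigl(p(\mX|\rvTheta_1^\varepsilon)p(\rvTheta_1^\varepsilon)/\gN(\rvTheta_1^\varepsilon|\bm 0,\gamma\I)\bigr)$. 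The key structural observation is that $\gF_{\mathrm{STL}}$ and the true objective integrand $\gF$ differ only in that the stochastic-integral term uses the detached drift $\rvu^{\perp}$ in place of $\rvu$; hence their \emph{expectations} agree (both It\^o integrals have zero mean), so $J_{\mathrm{STL}} = J$ and equation~\eqref{eq:zero exp} applies verbatim to $J_{\mathrm{STL}}$.

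The crux is then to show that the pathwise variation $\frac{\mathrm d}{\mathrm d\varepsilon}\gF_{\mathrm{STL}}(\rvu^*+\varepsilon\vphi)\big|_{\varepsilon=0}$ is actually deterministic, equal to its expectation, and hence zero. I would proceed by introducing the tangent process $\rvY_t := \frac{\mathrm d}{\mathrm d\varepsilon}\rvTheta_t^\varepsilon\big|_{\varepsilon=0}$, which solves a linear SDE with \emph{no} noise term (the Brownian increment is shared across $\varepsilon$), $d\rvY_t = \bigl(\nabla_\vtheta\rvu^*(t,\rvTheta_t)\rvY_t + \vphi(t,\rvTheta_t)\bigr)dt$, $\rvY_0 = 0$. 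Differentiating the three pieces: the quadratic term yields $\frac{1}{\gamma}\int_0^1 \rvu^*(t,\rvTheta_t)^\top\bigl(\nabla_\vtheta\rvu^*\,\rvY_t + \vphi\bigr)dt$; the detached stochastic term yields $\frac{1}{\sqrt\gamma}\int_0^1 \bigl(\nabla_\vtheta\rvu^{*,\perp}(t,\rvTheta_t)\rvY_t\bigr)^\top d\rvw_t$ — note the outer $\vphi$-variation of the integrand vanishes because the drift is detached, leaving only the chain-rule term through $\rvTheta_t$ — and the terminal term yields $-\nabla_\vtheta\bigl[\ln(p(\mX|\vtheta)p(\vtheta)/\gN)\bigr]\big|_{\rvTheta_1}^\top \rvY_1$. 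Now I would invoke the Hamilton–Jacobi–Bellman / verification identity: since $\rvu^*$ is optimal, the value function $v$ satisfies $\rvu^*(t,\vtheta) = -\gamma\nabla_\vtheta v(t,\vtheta)$ (Fleming's log-transform, already used in the proof of Lemma~\ref{lem:vi}), and along optimal trajectories It\^o's formula gives $dv(t,\rvTheta_t) = \bigl(\tfrac{1}{2\gamma}\|\rvu^*\|^2\bigr)dt + \nabla_\vtheta v(t,\rvTheta_t)^\top\sqrt\gamma\,d\rvw_t$ after the HJB PDE cancellation. The point is that substituting $\nabla_\vtheta v = -\rvu^*/\gamma$ and combining the three differentiated pieces makes the deterministic $\rvY_t$-coupling terms telescope against an integration-by-parts in $t$ (using the adjoint of the tangent equation), while the detached stochastic integral is \emph{precisely} what is needed to cancel the residual $d\rvw_t$ contribution coming from $dv(t,\rvTheta_t)$ — this is the entire design rationale of the STL estimator.

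The main obstacle is the bookkeeping in that cancellation: one must carefully track which stochastic integral survives. Concretely, I expect the clean route is to write $\gF_{\mathrm{STL}}(\rvu^*) = v(0,0) + \int_0^1 (\cdots)\,dt + \frac{1}{\sqrt\gamma}\int_0^1(\rvu^{*,\perp} - \rvu^*)^\top d\rvw_t + (\text{terminal reconciliation})$ by applying It\^o to $v(t,\rvTheta_t)$ along the optimal path, so that at $\rvu = \rvu^*$ the functional $\gF_{\mathrm{STL}}$ collapses to the \emph{constant} $v(0,0) = J(\rvu^*)$ plus a term whose variation I must show is zero. Differentiating this representation in $\varepsilon$, every term involving $\rvY_t$ will appear paired so as to reconstruct $\frac{\mathrm d}{\mathrm d\varepsilon}v(0,\rvTheta_0^\varepsilon)\big|_{\varepsilon=0} = \nabla_\vtheta v(0,0)^\top\rvY_0 = 0$ since $\rvY_0 = 0$, and the stochastic-integral variations cancel by the detachment. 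I would cite \cite{nusken2021solving} for the analogous computation with the non-detached estimator (their Theorem~5.3.1, Equation~133) and emphasize that the \emph{only} change is the removal of one $\varepsilon$-dependent stochastic integrand, which is exactly the offending non-vanishing term in their clause~3; setting it to zero by detachment forces the directional derivative to vanish almost surely. The regularity needed — smoothness of $v$, boundedness of $\vphi$, and $L^2$ control of $\rvY_t$ — follows from the standing assumptions on $f = d\pi_1/d\gN(\bm 0,\gamma\I)$ in Theorem~\ref{thrm:tzen} together with standard linear-SDE estimates, so interchanging $\frac{\mathrm d}{\mathrm d\varepsilon}$ with the integrals is justified.
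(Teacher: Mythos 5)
Your proposal is correct and follows essentially the same route as the paper: introduce the tangent process, observe that the detached It\^o integral varies only through the state and contributes $\frac{1}{\sqrt\gamma}\int_0^1(\nabla\rvu^*\,\rvY_t)^\top d\rvw_t$, and cancel this exactly against the residual stochastic integral from the running and terminal costs given by \cite{nusken2021solving} (Theorem~5.3.1, Equation~133). The only difference is that you additionally sketch a from-scratch HJB/It\^o derivation of that cited identity, which the paper simply invokes.
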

The proof for the above result can be found in Appendix \ref{appdx:stl}, and combines results from \cite{nusken2021solving}.

\subsection{Structured SVI in Models with Local and Global Variables}\label{subsec:svi}

Algorithm \ref{alg:svi} produces unbiased estimates of the gradient\footnote{Gradients are computed automatically via reverse mode differentiation \citep{bartholomew2000automatic,giles2008extended} using the pytorch library \citep{paszke2019pytorch}.} as demonstrated in Appendix \ref{apdx:svi} only under the assumption that the parameters are global, that is when there is not a local parameter for each data point. In the setting where we have local and global variables we can no longer do mini-batch updates as in Algorithm \ref{alg:svi} since the energy term in the objective does not decouple as a sum over the datapoints \citep{hoffman2013stochastic, hoffman2015structured}. In this section we discuss said limitation and propose a reasonable heuristic to overcome it.

We consider the general setting where  our model has global and local variables $\bm{\Phi}, \{\vtheta_i\}$ satisfying $\vtheta_i\bigCI\vtheta_j \vert   \bm{\Phi}$ \citep{hoffman2013stochastic}. This case is particularly challenging as the local variables  scale with the size of the dataset and so will the state space. This is a fundamental setting as many hierachical latent variable models in machine learning admit such dependancy structure, such as Topic models \citep{pritchard2000stephens,blei2003latent};  Bayesian factor analysis \citep{amari1996new,bishop1999bayesian,klami2013bayesian,daxberger2019bayesian};  Variational GP Regression \citep{hensman2013gaussian}; and others. 

\begin{remark}\label{remark:fail}
    The heat semigroup does not preserve conditional independence structure in the drift, i.e. the optimal drift does not decouple and thus depends on the full state-space (Appendix \ref{apdx:decoup}).
\end{remark}

\begin{figure*}[t!]
    \centering
    \begin{minipage}{0.3\textwidth}
    \centering
    \includegraphics[width=0.9\textwidth]{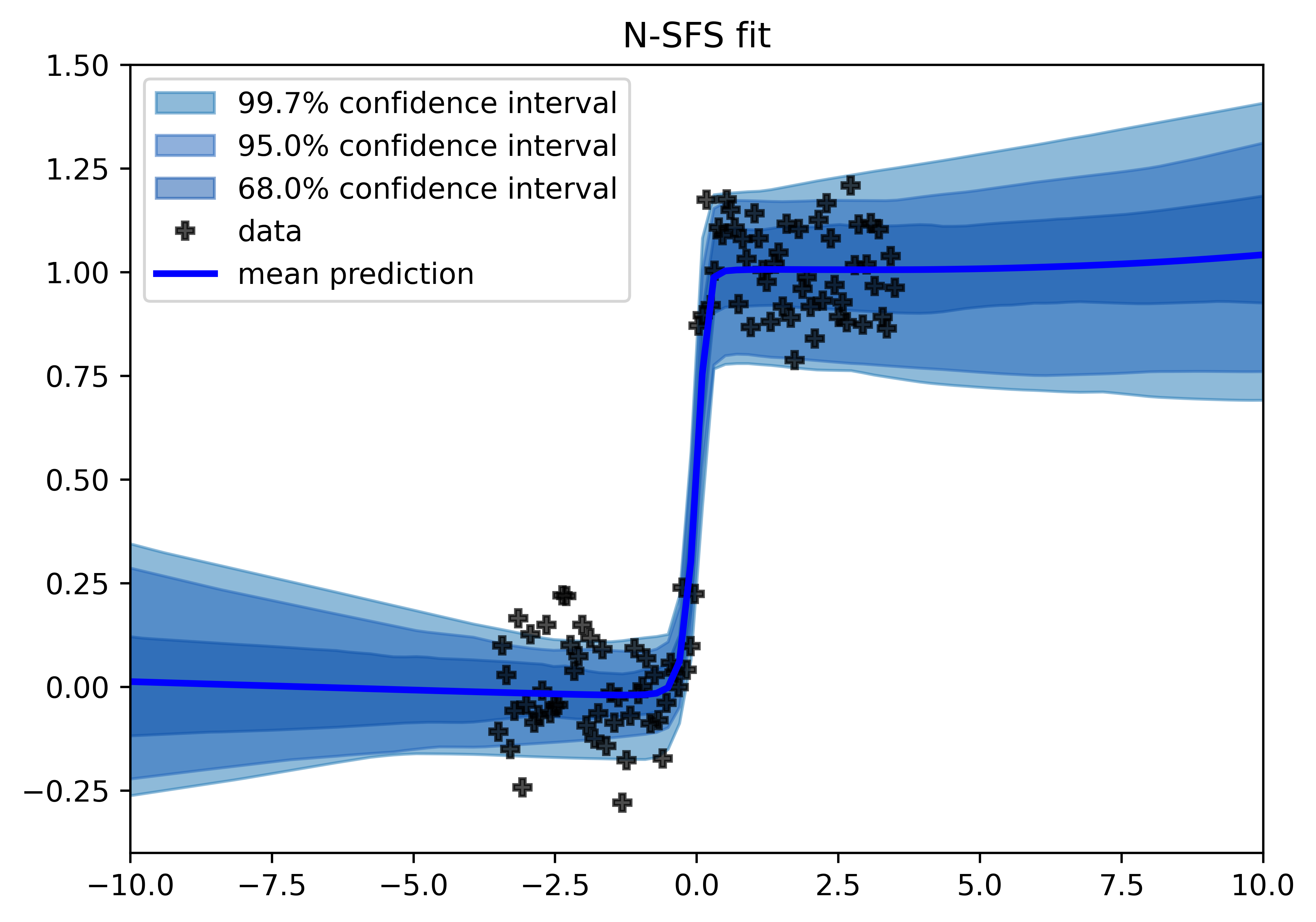}
    \end{minipage}
    \hspace*{\fill}\begin{minipage}{0.3\textwidth}
    \centering
    \includegraphics[width=0.9\textwidth]{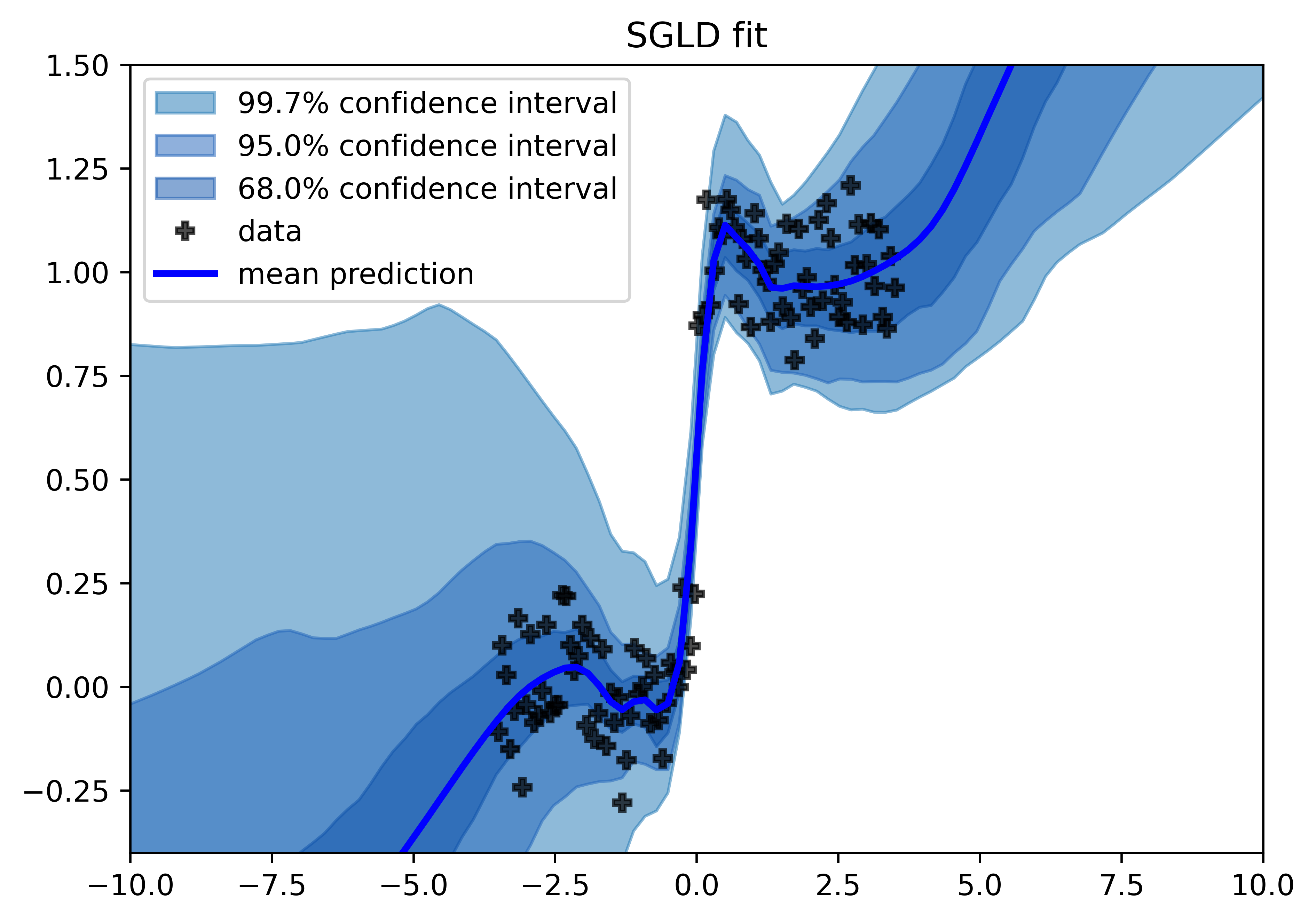}
   
    \end{minipage}
    \hspace*{\fill}\begin{minipage}{0.3\textwidth}
    \centering
    \includegraphics[width=0.9\textwidth]{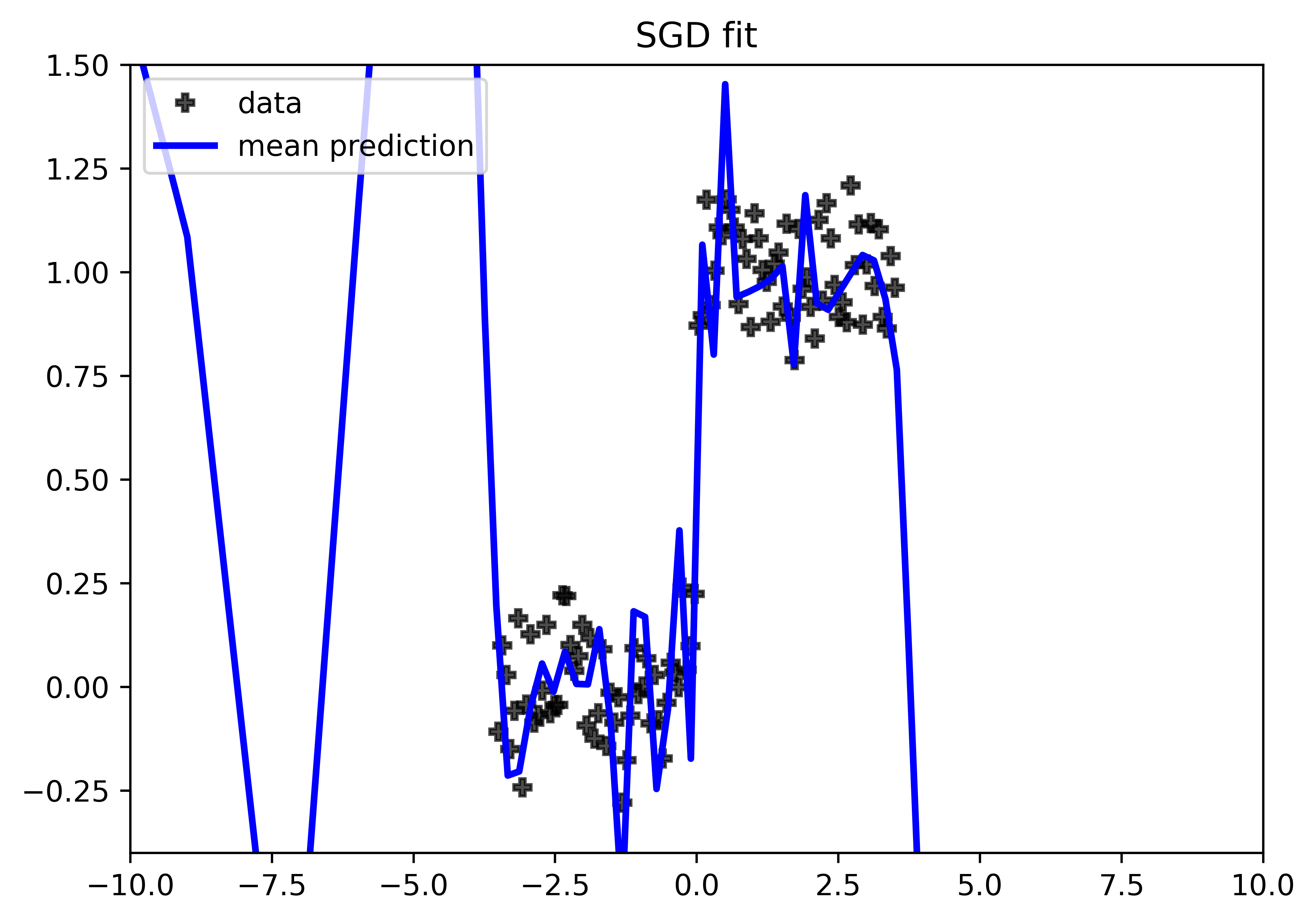}
   
    \end{minipage}
    \caption{ Visual comparison on step function data. We can see how the N-SFS based fits have the best generalisation while SGD and SGLD interpolate the noise.} \label{fig:step_functions}
\end{figure*}

Remark \ref{remark:fail} tells us that the drift is not structured in a way that admits scalable sampling approaches such as stochastic variational inference (SVI) \citep{hoffman2013stochastic}. Additionally this also highlights that the method by \cite{huang2021schrodinger} does not scale to models like this as the dimension of the state space will be linear in the size of the dataset.

In a similar fashion to \citet{hoffman2015structured}, who focussed on structured SVI, we suggest parametrising the drift via $[\rvu_{t}]_{\vtheta_i} \!\!=\!u^{\vtheta_i}(t, \vtheta_i,\Phi,\vx_i) $;
this way the dimension of the drift depends only on the respective local variables and the global variable $\Phi$. While the Föllmer drift does not admit this particular decoupling we can show that this drift is flexible enough to represent fairly general distributions, thus it is expected to have the capacity to reach the target distribution. 
Via this parametrisation we can sample in the same fashion as SVI and maintain unbiased gradient estimates.

\begin{remark}\label{rem:example}
    An SDE parametrised with a decoupled drift $[\rvu_{t}]_{\vtheta_i} =u^{\vtheta_i}(t, \vtheta_i,\Phi,\vx_i)$ can reach transition densities which do not factor (See Appendix \ref{apdx:decoup} for proof).
\end{remark}



It is important to highlight that whilst the parametrisation in Remark  \ref{rem:example} may be flexible, it may not satisfy the previous theory developed for the Föllmer drift and SBPs, thus an interesting direction would be in recasting the SBP such that the optimal drift is decoupled. However, we found in practice that the decoupled and amortised drift worked very well, outperforming SGLD and the non-decoupled  N-SFS.

\section{Connections Between SBPs and Variational Inference in Latent Diffusion Models} \label{sec:vi}

In this section, we highlight the connection between the objective in Equation \ref{eq:CSFP} to variational inference in models with an SDE as the latent object, as studied in \cite{tzen2019neural}. 
We first start by making the connection in a simpler case -- when the prior of 
our Bayesian model is given by a Gaussian distribution with variance $\gamma$, 
that is $p(\rvtheta)=\gN(\rvtheta\vert  \bm{0}, \gamma \I)$.
\begin{observation}\label{obs1}
    When $p(\rvtheta)=\gN(\rvtheta\vert  \bm{0}, \gamma \I)$, it follows that 
    the N-SFP objective in Equation \ref{eq:CSFP} corresponds to the negative 
    ELBO of the model:

    \begin{align}
        d\rvTheta_t &= \sqrt{\gamma} d\rvw_t, \;\;\; \rvTheta_0 \sim \delta_0,  \nonumber \\
        \rvx_i &\sim p(\rvx_i \vert   \rvTheta_1) . \label{eq:model_simple_orignal}
    \end{align}
\end{observation}
While the above observation highlights a specific connection between N-SFP and 
traditional VBI (Variational Bayesian Inference), it is limited to Bayesian models that are specified with Gaussian priors. In Lemma \ref{lem:vi} of Appendix \ref{appdx:vicon} we extend this result to more general priors and reference process via exploiting the general recursive nature of Bayesian updates \citep{khan2021bayesian}.
In short, we can view the objective in Equation \ref{eq:CSFP} as an instance of variational Bayesian inference with an SDE prior. Note that this provides a succinct connection between variational inference and maximum entropy in path space \citep{leonard2012schrodinger}. In more detail, this observation establishes an explicit connection between the ELBO of an SDE-based generative model where the SDE is latent and the SBP/stochastic-control objectives we explore in this work.

Note that Lemma \ref{lem:vi} induces a new two stage algorithm in which we first estimate a prior reference process as in Equation \ref{eq:prior_foll} and then we optimise the ELBO for the model in Equation \ref{eq:model_simple_2}. This raises the question as to what effect the dynamical prior can have within SBP-based frameworks. In practice we do not explore this formulation as the Föllmer drift of the prior may not be available in closed form and thus may require resorting to additional approximations.

\section{Experimental Results}

We ran experiments on Bayesian NN regression, classification, logistic regression and ICA \citep{amari1996new}, reporting
accuracies, log joints \citep{welling2011bayesian,izmailov2021bayesian} and expected calibration error (ECE) \citep{guo2017calibration}. For details on exact experimental setups please see Appendix \ref{apdx:exp}. Across experiments we compare to SGLD as it has been shown to be a competitive baseline in Bayesian deep learning \citep{izmailov2021bayesian}. Notice that we do not compare to more standard MCMC methodologies \citep{duane1987hybrid,neal2011mcmc,doucet2001sequential} as they do not scale well to very high dimensional tasks such as Bayesian DL \citep{izmailov2021bayesian} which are central to our experiments. However, \cite{huang2021schrodinger} contrasts the performance of the heat semigroup SFS sampler with more traditional MCMC samplers in 2D toy examples, finding SFS to be competitive \footnote{Supporting code at \url{https://github.com/franciscovargas/ControlledFollmerDrift}.}.

\begin{table}[]
        \caption{a9a dataset.}
      \centering
        \adjustbox{max width=0.9\linewidth}{\begin{tabular}{@{}llll@{}}
\toprule
Method & Accuracy          & ECE               & Log Likelihood     \\ \midrule
N-SFS  & $0.8498\pm0.0002$ & $0.0099\pm 0.0010$ & $-0.3407\pm0.0004$ \\
SGLD   & $0.8515\pm0.0010$  & $0.0010\pm0.0020$   & $-0.3247\pm0.0002$ \\ \bottomrule
\end{tabular} }\label{tab:logreg}
\end{table}

\begin{table}
    \begin{minipage}{.5\linewidth}
      \caption{Step function dataset.}
      \centering
\adjustbox{max width=0.9\linewidth}{
        \begin{tabular}{@{}lll@{}}
\toprule
Method & MSE             & Log Likelihood     \\ \midrule
N-SFS  & $0.0028\pm0.0010$ & $-63.048\pm8.2760$ \\
SGLD   & $0.1774\pm0.1280$  & $-1389.581\pm834.9680$ \\ \bottomrule
\end{tabular} \label{tab:step}}
    \end{minipage}%
    \begin{minipage}{.5\linewidth}
      \centering
        \caption{MEG dataset.}
\adjustbox{max width=0.9\linewidth}{
\begin{tabular}{@{}ll@{}}
\toprule
Method   & Log Likelihood     \\ \midrule
N-SFS   & $-5.1110\pm0.1288$ \\
SGLD    & $-4.9360\pm0.0423$ \\ \bottomrule
\end{tabular} \label{tab:ica}
}
    \end{minipage} 
\end{table}
\subsection{Bayesian Linear Regression and Comparison with MC-SFS}

In this section we explore a bayesian linear regression model with a prior on the regression weights. As this model has a Gaussian closed form for the posterior predictive distribution we report the error of the MC-SFS and N-SFS posterior predictive mean and variance with respect to the true posterior predictive moments as is seen in Figure \ref{fig:mc-sfs}. The datasets where generated by sampling the inputs randomly from a spherical Gaussian distribution and transforming them via:
\begin{align*}
    y_i = \vone ^\top \vx_i + 1
\end{align*}
we then estimated the posterior of the model:
 \begin{align*}
 \vtheta &\sim \gN(\bm{0}, \sigma^2_\theta \mathbb{I}),\\
     y_i \vert   \vx_i&,\vtheta \sim \gN(y_i\vert  \vtheta^\top (\vx_i \oplus 1), \sigma_y^2\sI), 
 \end{align*}
Where we use $\vx \oplus 1$ to denote adding an extra dimension with a $1$ to the vector $\vx$. We carried out this experiment increasing the dimension of $\vx$ from $2^5$ to $2^{11}$. We observe that the N-SFS based approaches have overall a notably smaller posterior predictive error to the MC-SFS approach. Finally we note the STL method is more concentrated in its predictions than the naive N-SFS approach, whilst having similar errors. 

\subsection{Bayesian Logistic Regression / Independent Component Analysis - a9a / MEG Datasets}

Following \cite{welling2011bayesian} we explore a logistic regression model on the a9a dataset. Results can be found in Table \ref{tab:logreg} which show that N-SFS achieves a test accuracy, ECE and log likelihood comparable to SGLD. We then explore the performance of our approach on the Bayesian variant of ICA studied in \cite{welling2011bayesian} on the MEG-Dataset \citep{vigario1997meg}.  We can observe (Table \ref{tab:ica}) that here N-SFS also achieves results comparable to SGLD.

\subsection{Bayesian Deep Learning}
 In these tasks we use models of the form
 \begin{align*}
 \vtheta &\sim \gN(\bm{0}, \sigma^2_\theta \mathbb{I}),\\
     \vy_i \vert   \vx_i&,\vtheta \sim p(\vy_i\vert  f_\vtheta(\vx_i)), 
 \end{align*}
 where $f_\vtheta$ is a neural network. In these settings we are interested in using the posterior predictive distribution $p(\vy^* \vert   \vx^*, \mX) \!=\!\int p(\vy^*\vert  f_\vtheta(\vx^*)) dP(\vtheta\vert   \mX) $ to make robust predictions. Across the image experiments we use the LeNet5 \citep{lecun1998gradient} architecture. Future works should explore recent architectures for images such as VGG-16 \citep{simonyan2014very} and ResNet32 \citep{he2016deep}.

\noindent  \textbf{Non-linear Regression - Step Function:} We fit a 2-hidden-layer neural network with a total of 14876 parameters on a toy step function dataset. We can see in Figure \ref{fig:step_functions} how both the SGD and SGLD fits interpolate the noise, whilst N-SFS has straight lines, thus both achieving a better test error and having well-calibrated error bars. We believe it is a great milestone to see how an overparameterised neural network is able to achieve such well calibrated predictions. 

\begin{table*}[]
\centering \caption{Test set results on MNIST, Rotated MNIST and CIFAR10. The Log-likelihood column is the mean posterior predictive and is thus not estimated for SGD.}
\begin{tabular}{@{}ccccc@{}}
\toprule
Dataset                        & Method                 & Accuracy          & ECE               & Log Likelihood                 \\ \midrule
\multirow{4}{*}{MNIST}         & N-SFS                  & $0.9889\pm0.0013$ & $0.0080\pm0.0013$ & $-0.0883\pm0.0076$ \\
                               & N-SFS$_{\mathrm{stl}}$ & $0.9885\pm0.0014$ & $0.0092 \pm0.0017$ & $-0.0629\pm0.0057$ \\
                               & SGLD                   & $0.9837\pm0.0007$ & $0.0061\pm0.0012$ & $-0.0516\pm0.0026$ \\
                               & SGD                    & $0.9884\pm0.0007$ & $0.0034\pm0.0009$ & - \\
                               &                        &                   &                   &                    \\
\multirow{4}{*}{Rotated-MNIST} & N-SFS                  & $0.9479\pm0.0043$ & $0.0077\pm0.0012$ & $-0.3890\pm0.0374$ \\
                               & N-SFS$_{\mathrm{stl}}$ & $0.9461\pm0.0039$ & $0.0057\pm0.0012$ & $-0.2960\pm0.0336$ \\
                               & SGLD                   & $0.9247\pm0.0035$ & $0.0141\pm0.0018$ & $-0.2439\pm0.0118$ \\
                               & SGD                    & $0.9404\pm0.0031$ & $0.0284\pm0.0021$ & - \\
                               &                        &                   &                   &                    \\
\multirow{4}{*}{CIFAR10}       & N-SFS                  & $0.6156\pm0.0021$ & $0.0520\pm0.0110$ & $-1.3628\pm0.0262$ \\
                               & N-SFS$_{\mathrm{stl}}$ & $0.6264\pm0.0286$ & $0.0568\pm0.0069$ & $-1.2305\pm0.0710$ \\
                               & SGLD                   & $0.6232\pm0.0186$ & $0.1493\pm0.0170$ & $-1.2740\pm0.0854$ \\
                               & SGD                    & $0.6229\pm0.0124$ & $0.0626\pm0.0163$ & - \\ \bottomrule
\end{tabular} \label{tab:dl_class}
\end{table*}

\noindent \textbf{Digits Classification - LeNet5:} We train the standard LeNet5 \citep{lecun1998gradient} architecture (with 44426 parameters) on the MNIST dataset \citep{mnisthandwrittendigit}. At test time we evaluate the methods on the MNIST test set augmented by random rotations of up to 30\degree \citep{ferianc2021effects}.  Table \ref{tab:dl_class} shows how N-SFS has the highest accuracy whilst obtaining the lowest calibration error among the considered methods, highlighting that our approach has the most well-calibrated and accurate predictions when considering a slightly perturbed test set. We highlight that LeNet5 falls into an interesting regime as the number of parameters is considerably less than the size of the training set, and thus we can argue it is not in the overparameterised regime. This regime \citep{belkin2019reconciling} has been shown to be  challenging in achieving good generalisation errors, thus we believe the predictive and calibrated accuracy achieved by N-SFS is a strong milestone.

Additionally we provide results on the regular MNIST test set. We can observe that N-SFS maintains a high test accuracy and at the same time preserves a low ECE score. We believe the reason SGD and SGLD obtain slightly better ECE performances is that the MNIST test set has very little variation to the MNIST training set, and thus all results seem well calibrated. We can see this observation confirmed by how the distribution of ECE scores changes dramatically on the Rotated MNIST set, a similar argument to that developed in \cite{ferianc2021effects}. We note that across both experiments SGLD achieves a slightly better log likelihood which comes at the cost of lower predictive performance and less calibrated predictions.

\noindent \textbf{Image Classification - CIFAR10:} We fit a variation of the LeNet5 (Appendix \ref{apdx:bnn}) architecture with 62006 parameters on the CIFAR10 dataset \citep{krizhevsky2009learning}. We note that the predictive test accuracies and log-likelihoods of N-SFS$_{\mathrm{stl}}$, SGLD and SGD are comparable. However, we can see that N-SFS$_{\mathrm{stl}}$ has an ECE an order of magnitude smaller. We notice that the STL estimator made a significant difference on CIFAR10, making the training faster and more stable.

\subsection{Hyperspectral Image Unmixing}

To assess our method's performance visually, we use it to sample from Hyperspectral
Unmixing Models~\citep{Bioucas-Dias2012}. Hyperspectral images are high spectral resolution but low spatial
resolution images typically taken of vast areas via satellites. High spectral resolution
provides much more information about the materials present in each pixel. However, 
due to the low spatial resolution, each pixel of an image can correspond to a 
$50m^2$ area, containing several materials. Such pixels will therefore have mixed
and uninformative spectra. The task of Hyperspectral Unmixing is to determine
the presence of given materials in each pixel.

\begin{figure}[t!]
    \centering
    \hspace*{\fill}\begin{minipage}{0.2\textwidth}
        \centering
        \includegraphics[width=\textwidth]{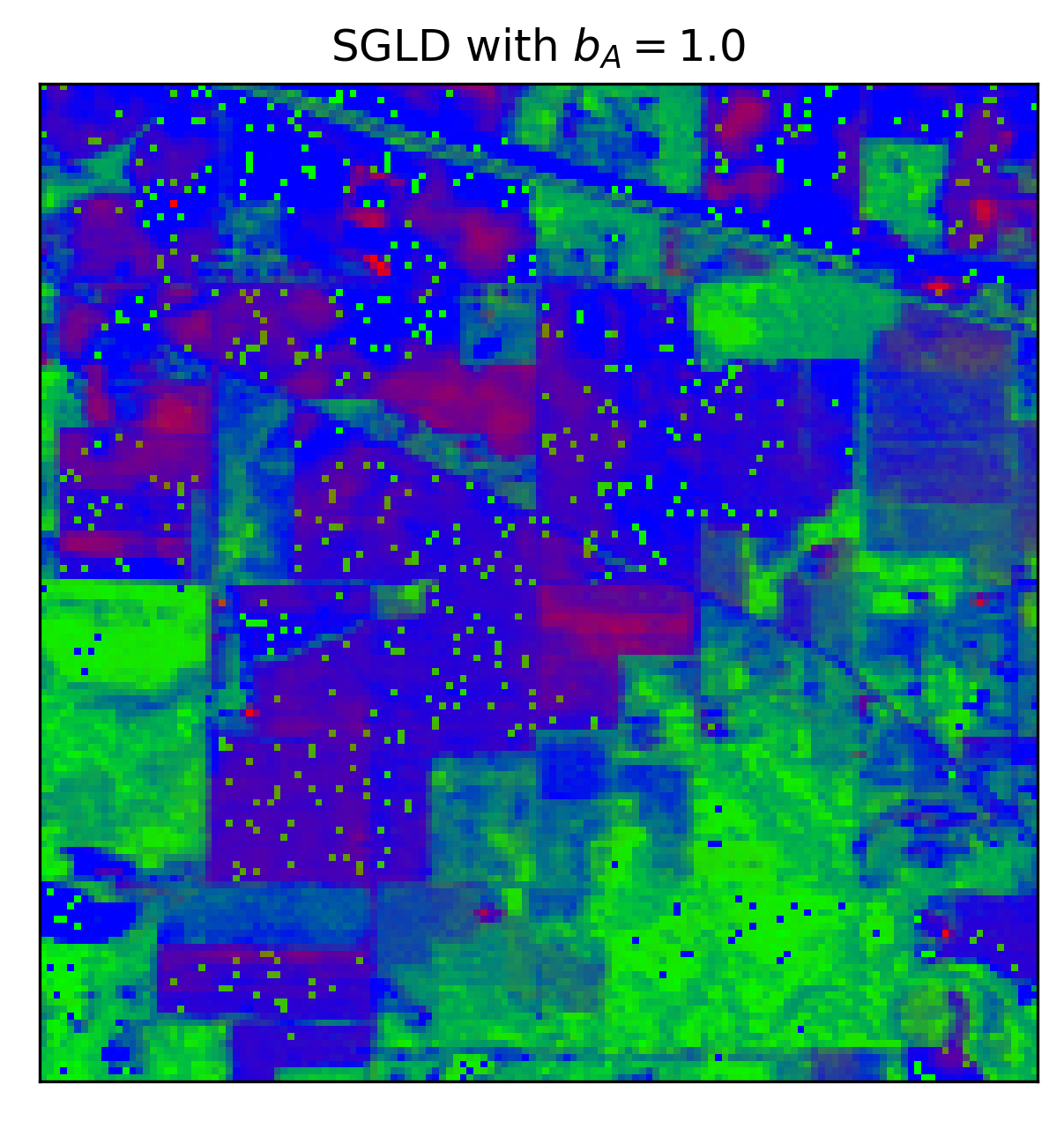}
    \end{minipage}
    \hspace*{\fill}\begin{minipage}{0.2\textwidth}
        \centering
        \includegraphics[width=\textwidth]{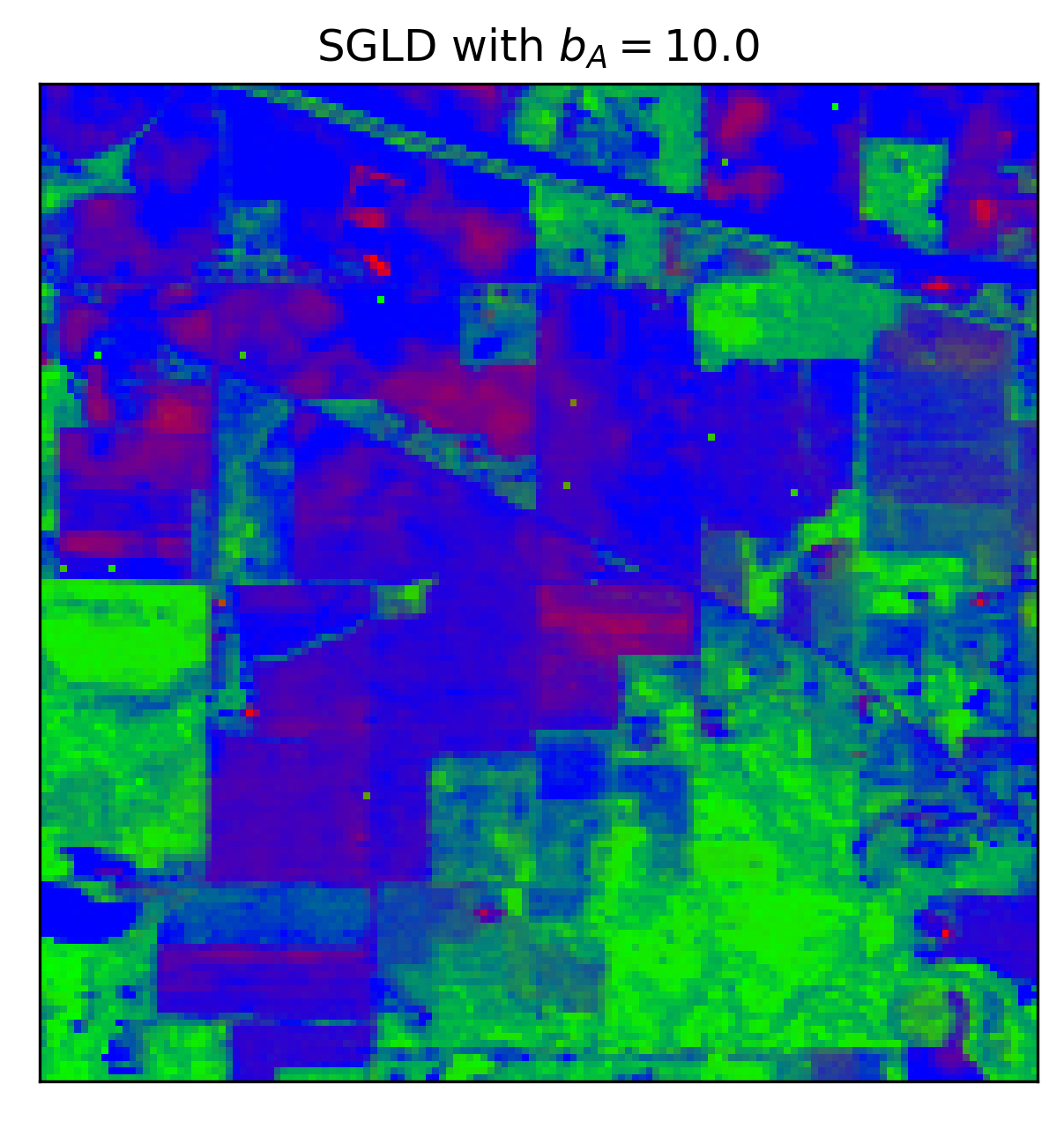}
    \end{minipage}
    \hspace*{\fill}\begin{minipage}{0.2\textwidth}
        \centering
        \includegraphics[width=\textwidth]{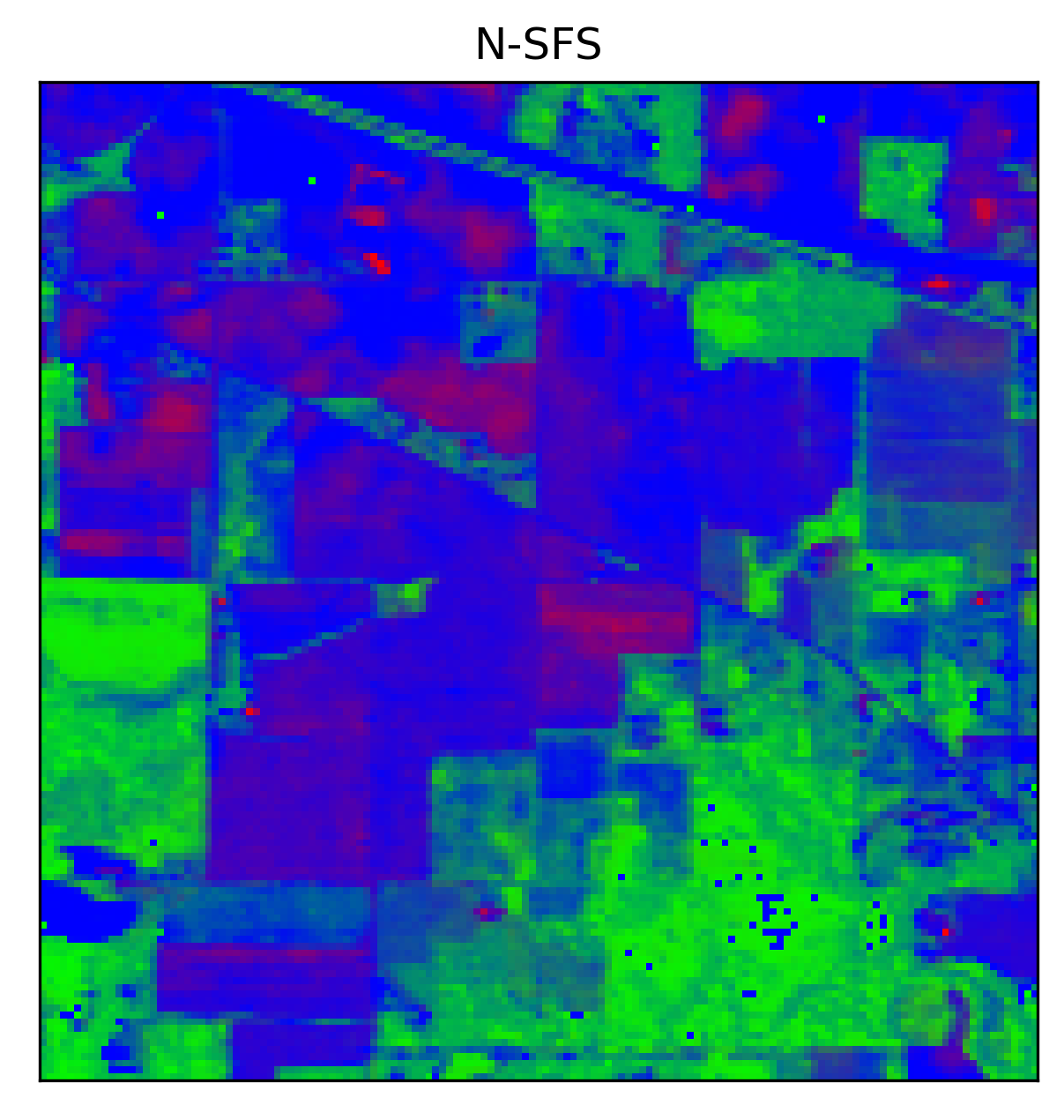}
    \end{minipage}
    \hspace*{\fill}\begin{minipage}{0.2\textwidth}
        \centering
        \includegraphics[width=\textwidth]{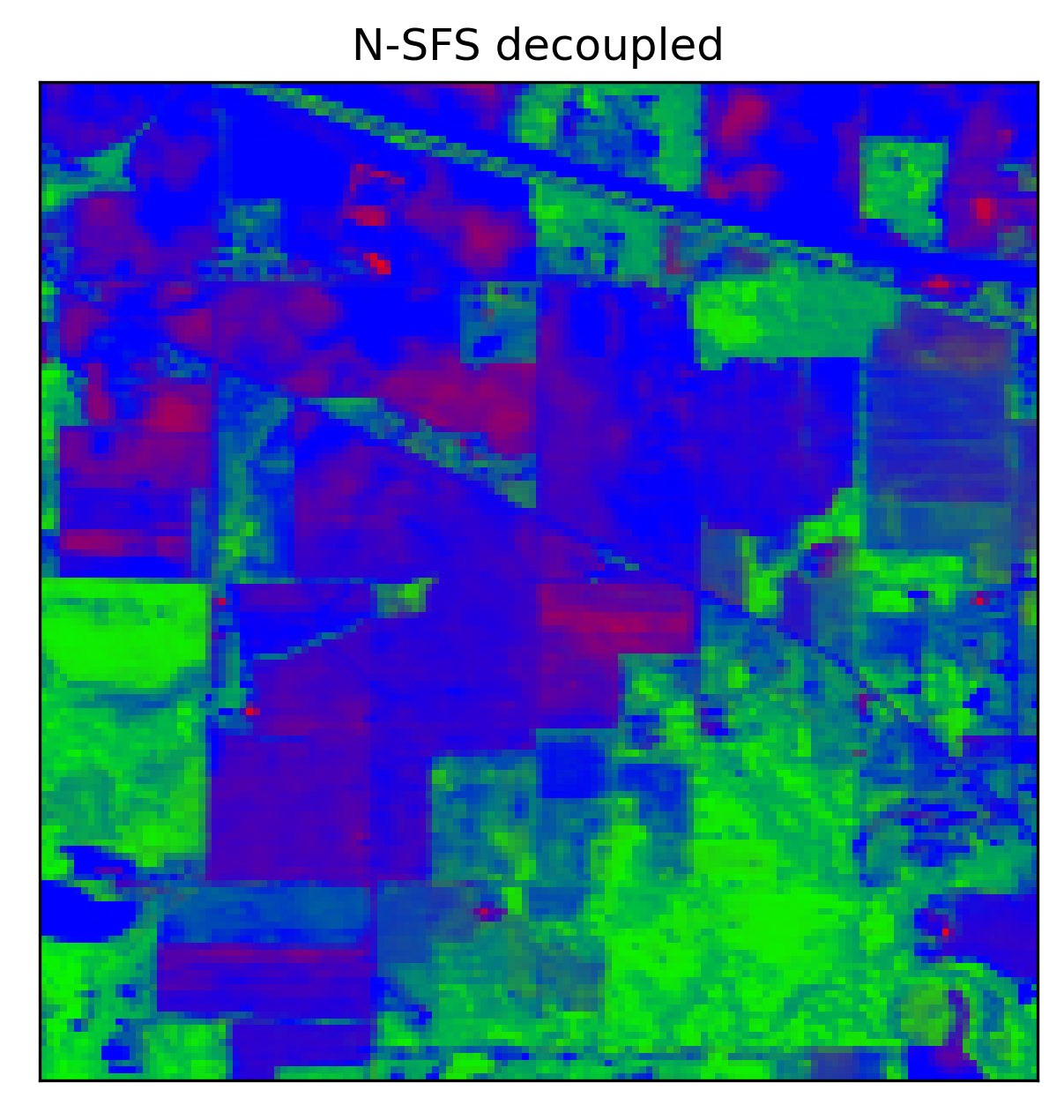}
    \end{minipage}

    \caption{False-color composites with channels given by the unmixed matrices $\mA$
        obtained via SGLD, N-SFS and N-SFS with a decoupled drift. Speckles illustrate mode collapse.} \label{fig:unmixing_results}
\end{figure}

We use the Indian Pines image\footnote{taken from \url{http://www.ehu.eus/ccwintco/index.php/Hyperspectral_Remote_Sensing_Scenes}}, 
denoted as $\bY$, which has a spatial resolution of
$P = 145 \times 145 = 21025$ pixels and a spectral resolution of $B = 200$ bands, i.e.\ $\bY = [\by_1, \dots, \by_P] \in [0, 1]^{B \times P}$.
$R = 3$ materials have been chosen automatically using the Pixel Purity Index and 
the collection of their spectra will be denoted as $\bM = [\bmm_1, \bmm_2, \bmm_3] \in [0, 1]^{B \times 3}$.
The task of Hyperspectral Unmixing is to determine for each pixel $p$ a vector 
$\ba_p \in \psimplex{R}$ in the probability simplex, where
$[\mA ]_{p,i}=a_{p, i}$ represents the fraction of the $i$-th material in pixel $p$.
To determine the presence of each material, we use the Normal Compositional Model
~\citep{Eches2010}
as it is a challenging model to sample from. Specifically, it has parameters $(\mPhi, \rvTheta) = (\sigma^2, \bA)$
and is defined by:
\begin{align*}
    p\left(\sigma^2\right) = \ind_{[0, 1]}\left(\sigma^2\right), \quad p\left(\bA\right) = \prod_{p=1}^P \ind_{\psimplex{R}} \left(\ba_p\right),\\
     p\left(\bY \vert   \bA, \sigma^2\right) = \prod_{p=1}^P \gN\left(\by_p; \bM\ba_p; \vert  \vert  \ba_p\vert  \vert  ^2 \sigma^2 I\right),
\end{align*}
First note that this model follows the structured model setting discussed in
Section 2.2 --- it has one global parameter $\sigma^2$ and a local parameter $\ba_p$
for each pixel.  
Finally, while all the parameters are constrained to lie
on the probability simplices, this sampling problem can be cast into an unconstrained
sampling problem via Lagrange transformations as in~\cite{Hsieh2018}.
The Normal Compositional Model~\cite{Eches2010} is primarily of interest to us
because the unusual noise scaling in the likelihood can produce several modes in each
pixel, making it especially easy for sampling algorithms to get stuck in modes.

We compared three approaches for this problem: 1) SGLD
2) N-SFS 3) N-SFS with decoupled drift, where the decoupled drift is defined as:
\begin{equation*}
\resizebox{0.902\columnwidth}{!}{$\displaystyle
    \vu_t(\sigma^2, \bA\!)\! =\! [u_0(t, \sigma^2), u_1(t, \sigma^2, \ba_1\!), \dots, u_P(t, \sigma^2, \ba_P\!)].
$}
\end{equation*}
Unmixing results are shown in Figure~\ref{fig:unmixing_results}. 
We stress that to run SGLD successfully we had to tune the approach heavily --- we 
used separate step sizes (which acts as a preconditioning) and step size schedules 
for parameters $\sigma^2$ and $\bA$, only with one combination of which we managed 
to get decent unmixing results.
Without the amortised drift, N-SFS struggled with multiple modes
in certain patches of the image, however, decoupling the drift resulted in almost
perfect unmixing.
With a slight deviation from the optimal step size schedule, SGLD fails to explore modes and produces speckly images. In contrast, the only tunable parameter for N-SFS 
was $\gamma$, which was giving similar results for all tried values. Further
sensitivity results for SGLD/N-SFS are provided in Appendix \ref{apdx:sens}.

\subsection{Analysis of N-SFS training dynamics}

\begin{figure*}[ht!]
    \centering
    \includegraphics[width=0.7\textwidth]{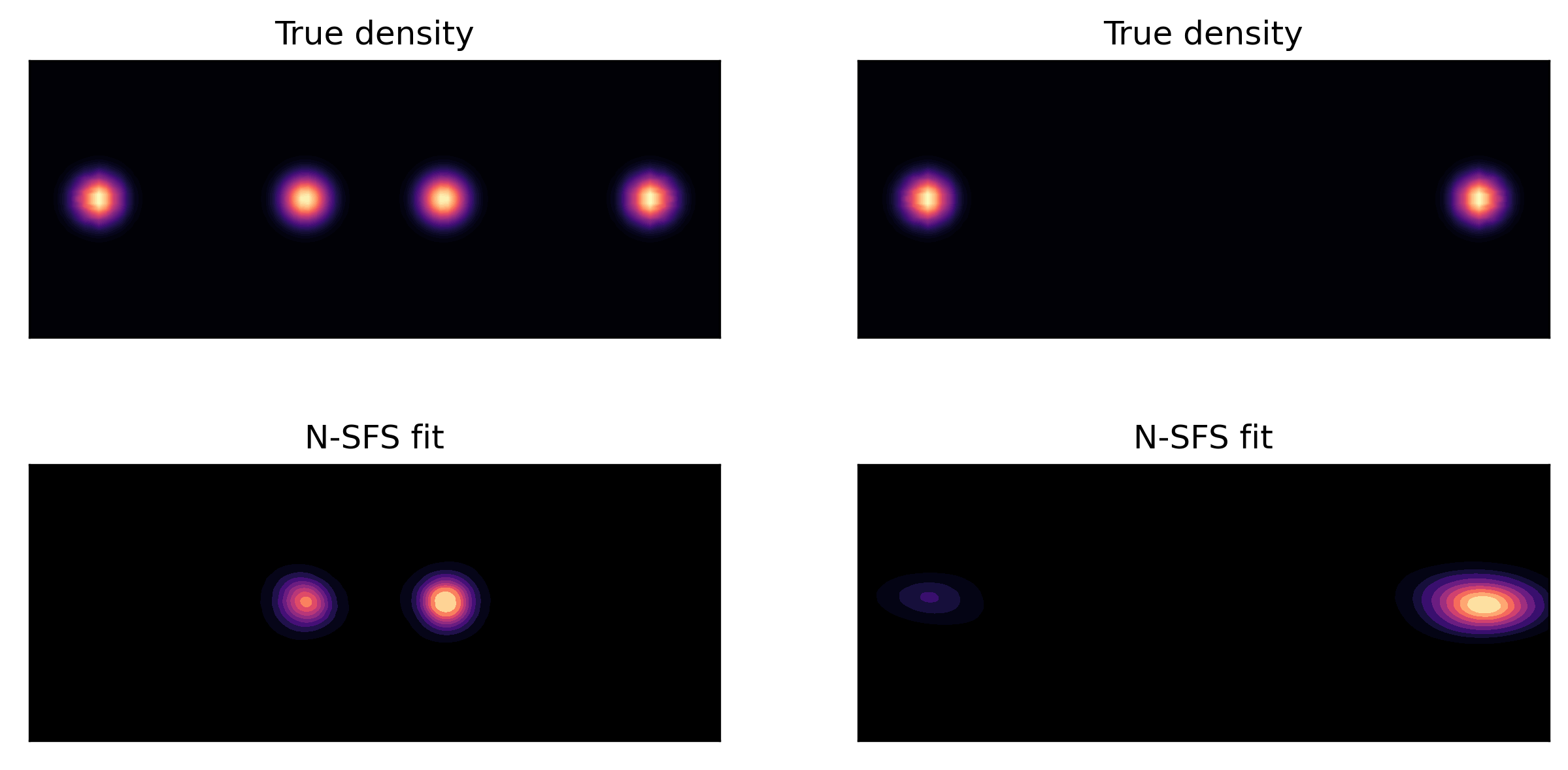}
    \caption{N-SFS performance on a gaussian mixture posterior distribution with several modes. Outer modes are only detected when the posterior does not contain the interior modes indicating exploration failure of N-SFS.}
    \label{fig:nsfs-modes}
\end{figure*}

In addition to the experiments above, we investigate our method's performance in a synthetic multi-modal scenario. Here, N-SFS is used to fit a Gaussian Mixture posterior distribution that has modes aligned on the $x$-axis, as shown in figure~\ref{fig:nsfs-modes}. In one case, there are 4 modes -- 2 inner modes (those closer to $0$) and 2 outer modes (those further away from $0$). We notice that in the presence of the 2 inner modes N-SFS is unable to discover the outer modes. In contrast, when considering a posterior with only the 2 outer modes, the distribution is fit correctly. This phenomenon could be explained by previously indicated connections between stochastic control and agent-based learning via the Hamilton-Jacobi-Bellman equation~\cite{powell2019} and the exploration-exploitation tradeoff. More concretely, the optimisation objective~\eqref{eq:CSFP} implies the following training dynamics -- random samples are generated from a diffusion (a Brownian motion to begin with) which is then refined to produce more samples in areas where previous samples had high posterior density. This implies that after some modes are discovered, the diffusion will be adjusted to fit them, i.e. the algorithm immediately starts exploiting the detected modes. Other modes will only be discovered if some random sample accidentally hits them, which is very unlikely if the modes are far away. This indicates that the algorithm could be improved by incorporating exploration techniques found in agent-based learning literature.

\begin{figure*}[ht!]
    \centering
    
    \begin{minipage}{0.35\linewidth}
      \includegraphics[width=1.0\textwidth]{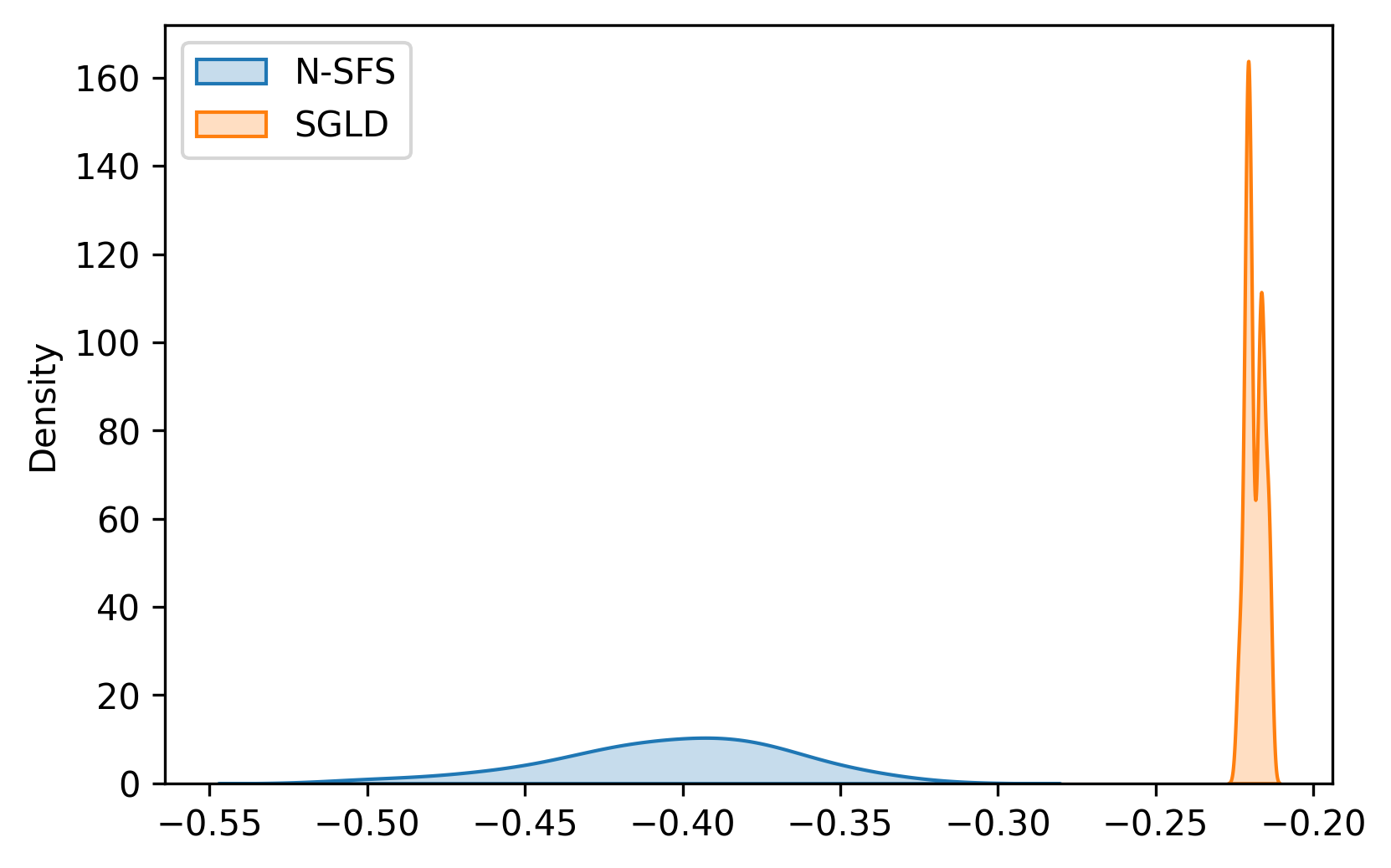}
    \end{minipage}
    \hspace*{0.1\linewidth}
    \begin{minipage}{0.35\linewidth}
      \includegraphics[width=1.0\textwidth]{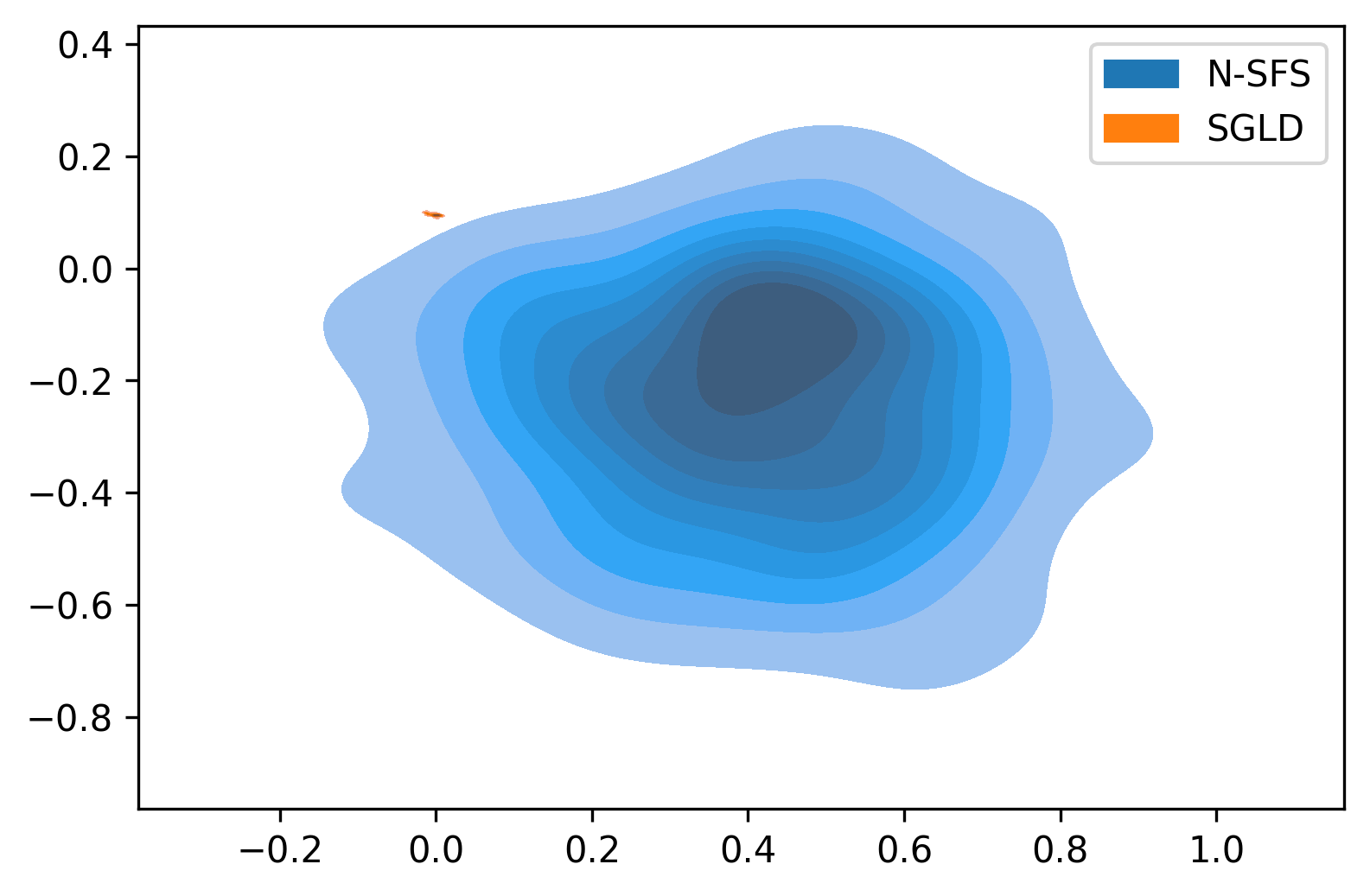}
    \end{minipage}
    
    \caption{Distribution of log posterior values of samples from N-SFS and SGLD \textbf{(left)} and marginal distribution of a pair of weights in a neural network obtained from samples of N-SFS and SGLD \textbf{(right)}}
    \label{fig:mnist-weight-dist}
\end{figure*}

Given the behaviour of N-SFS on this multi-modal example, it is then natural to ask if it happens in Bayesian Deep Learning applications. To examine this, we look at the marginal distributions of a pair of weights of a Bayesian Neural Network for MNIST classification given by the samples of N-SFS and SGLD given in Figure~\ref{fig:mnist-weight-dist}. Note that compared to SGLD, N-SFS samples from a dramatically wider distribution, while maintaining a comparable predictive log likelihood score, and therefore does not suffer from the lack of exploration.

\section{Discussion and Future Directions}

Overall we achieve predictive performance competitive to SGLD across a variety of tasks whilst obtaining better calibrated predictions as measured by the ECE metric. We hypothesise that the gain in performance is due to the flexible and low variance VI parametrisation of the proposed approach. We would like to highlight that these results were achieved with minimal tuning and simple NN architectures. We find that the decoupled and amortised drift we propose achieves very strong results making our approach tractable to Bayesian models with local and global structure. Additionally we notice that the architecture used in the drift network can influence results, thus future work in this area should develop the drift architectures further. 

A key advantage of our approach is that at training time the objective  effectively minimises an ELBO styled objective parameterised via a ResNet. This allows us to monitor training using the traditional techniques from deep learning, without the challenges arising from mixing times and correlation of samples found in traditional MCMC methods; once N-SFS is trained, generating samples at test time is a fast forward pass through a ResNet that does not require re-training. Finally, as we demonstrated, our approach allows the learned sampler to be amortised \citep{zhang2018advances} which not only allows the drift to be more tractably parameterised but also creates the prospects of meta learning the posterior \citep{edwards2016towards,yoon2018bayesian,gordon2018meta,gordon2021advances}. We believe that this work motivates how stochastic control paves a new exciting and promising direction in Bayesian ML/DL. 



\noindent \textbf{Acknowledgements.}  Francisco Vargas is Funded by Huawei Technologies Co. This research has been partially funded by Deutsche Forschungsgemeinschaft (DFG)
through the grant CRC 1114 ‘Scaling Cascades in Complex Systems’ (project A02, project number 235221301).
Andrius Ovsianas is funded by EPSRC iCASE Award EP/T517677/1. Mark Girolami is 
supported by a Royal Academy of Engineering Research Chair, and EPSRC grants 
EP/T000414/1, EP/R018413/2, EP/P020720/2, EP/R034710/1, EP/R004889/1.

\begin{appendices}
\onecolumn

\section{Main Results} 

\subsection{Posterior Drift}

\begin{repcorollary}{col:main}
The minimiser
\begin{align}
  \rvu^{*}=\argmin_{\rvu \in \gU}\E_{\rvTheta \sim \Q^{\rvu, \delta_0}}\left[\frac{1}{2\gamma}\int_0^1\|\rvu(t, \rvTheta_t)\|^2dt-\ln\left(\frac{ p(\mX \vert   \rvTheta_1)p(\rvTheta_1)}{\gN(\rvTheta_1\vert  \bm{0}, \gamma \I)}\right)\right]
\end{align}
satisfies $\law \rvTheta_1^{\rvu^{*}} = \frac{p(\mX\vert   \rvtheta)p(\rvtheta)}{\gZ}$.

\end{repcorollary}
\begin{proof}
 This follows directly after substituting the Radon-Nikodym derivative between the Gaussian distribution and the posterior into Theorem 1 in \cite{tzen2019theoretical} or Theorem 3.1 in \cite{dai1991stochastic}.
\end{proof}

\subsection{EM-Discretisation Result} \label{app:em}

First we would like to introduce the following auxiliary theorem from \cite{tzen2019theoretical}:

\begin{theorem}\label{thrm:tzen} \citep{tzen2019theoretical} Given the standard regularity assumptions presented for $f=\frac{d\pi_1}{d\gN(\bm{0}, \gamma\sI)}$ in \cite{tzen2019theoretical}, let $L= \max \{\mathrm{Lip}(f),  \mathrm{Lip}(\nabla f)\}$ and assume that there exists a constant $c \in (0,1]$ such that $f \ge c$. Then for any $\epsilon \in \left(0, 16 \frac{L^2}{c^2}\right)$ there exists a neural net $\vv : \sR
^d \times [0, 1] \rightarrow \sR^d$ with size polynomial in $1/\epsilon, d, L, c, 1/c, \gamma$, such that the activation function of each neuron follows the regularity assumptions in \cite{tzen2019theoretical} (e.g. $\mathrm{ReLU}, \mathrm{Sigmoid}, \mathrm{Softplus}$)
and 
\begin{align}
    \KL(\pi_1\vert  \vert   \pi^{\vv}_1 ) \leq \epsilon,
\end{align}
where $\pi^{\vv}_1=\law(\rvTheta_1^\vv)$ is the terminal distribution of
the diffusion process
\begin{align}
    d\rvTheta_t^{\vv} = \vv(\rvTheta_t^\vv, \sqrt{1-t})dt + \sqrt{\gamma} d\rvw_t, \qquad t \in [0,1].
\end{align}
\end{theorem}
We can now proceed to prove the direct corollary of the above theorem when using the EM scheme for simulation.

\begin{repcorollary}{col:euler}
Given the network $\vv$ from Theorem \ref{thrm:tzen} it follows that the Euler-Mayurama discretisation $\hat{X}_t^\vv$ of $X_t^\vv$ has a KL-divergence to the target distribution $\pi_1$ of:
\begin{align}
     \KL(\pi_1\vert  \vert   \hat{\pi}^{\vv}_1 ) \leq\left (\epsilon^{1/2} + \mathcal{O}(\sqrt{\Delta t}) \right)^2
\end{align}
\end{repcorollary}

\begin{proof}
Consider the path-wise KL-divergence between the exact Schrödinger-Föllmer process and its EM-discretised neural approximation:
\begin{align}
    \KL(\sP^{\rvu^*}\vert  \vert  \sP^{\hat{\vv}}) = \frac{1}{2 \gamma}\int_0^1\E_{\rvTheta \sim \Q^{\rvu*, \delta_0}}\left\|\rvu^*(\rvTheta_t,t)-\hat{\vv}(\rvTheta_t, \sqrt{1-t})\right\|^2dt.
\end{align}
Defining $d(\vx, \vy) := \sqrt{\frac{1}{2 \gamma}\int_0^1\E_{\rvTheta \sim \Q^{\rvu*, \delta_0}}\left\|\vx(\rvTheta_t, t)-\hat{\vy}(\rvTheta_t, t)\right\|^2dt}$, 
it is clear that $d(\vx, \vy)$  satisfies the triangle inequality as it is the $\gL^2({\sQ^{\gamma, \rvu^*, \delta_0}})$ metric between drifts, thus applying the triangle inequality at the drift level we have that (for simplicitly letting $\gamma=1$):
\begin{align*}
    d(\rvu^*, \hat{\vv}) &\leq \left(\int_0^1\E \left[\vert  \vert  \vu_t^* - \vv_{\sqrt{1-t}}\vert  \vert  ^2\right]dt\right)^{\frac{1}{2}} +  \left(\int_0^1\E\left\vert  \vert  \vv_{\sqrt{1-t}} - \hat{\vv}_{\sqrt{1-t}}\vert  \vert  ^2\right] dt \right)^{\frac{1}{2}}.
\end{align*}
From \cite{tzen2019theoretical} we can bound the first term resulting in:
\begin{align*}
    d(\rvu^*, \hat{\vv}) \leq\epsilon^{1/2} +  \left(\int_0^1 \E\left[\vert  \vert  \vv_{\sqrt{1-t}} - \hat{\vv}_{\sqrt{1-t}}\vert  \vert  ^2\right] dt \right)^{\frac{1}{2}}
\end{align*}
Now remembering that the EM drift is given by $\hat{\vv}_{\sqrt{1-t}}(\rvTheta_t) = \vv(\hat{\rvTheta}_t, \sqrt{1-\Delta t \lceil t/\Delta t\rceil})$, we can use that $\vv$ is L'-Lipschitz in both arguments, thus:
\begin{align*}
    d(\rvu^*, \hat{\vv}) &\leq\epsilon^{1/2} + \left(L'^2\int_0^1\E\left[\left(\left\|\rvTheta_t - \hat{\rvTheta}_t\right\| + \Delta t\right)^2\right]dt \right)^{\frac{1}{2}} \\
    &\leq \epsilon^{1/2} +  \left(2L'^2\left(\E\left[\int_0^1\left\|\rvTheta_t - \hat{\rvTheta}_t\right\|^2dt\right]+\Delta t^2\right) \right)^{\frac{1}{2}} \\
    &\leq \epsilon^{1/2} +  \left(2L'^2\left(\E\left[ \max_{0\leq t\leq 1}\left\|\rvTheta_t - \hat{\rvTheta}_t\right\|^2\right] + \Delta t^2\right) \right)^{\frac{1}{2}},
\end{align*}
which, using the strong convergence of the EM approximation \citep{gyongy1996existence}, implies:
\begin{align}
    \E\left[ \max_{0\leq t\leq 1} \left\|\rvTheta_t - \hat{\rvTheta}_t\right\|^2\right] \leq C_{L'} \Delta  t,
\end{align}
thus:
\begin{align*}
    d(\rvu^*, \hat{\vv}) \leq\epsilon^{1/2} +  L'\sqrt{2}\left(\sqrt{C_{L'}\Delta t} + \Delta t \right).
\end{align*}
Squaring both sides and applying the data processing inequality completes the proof.
\end{proof}

\section{Connections to VI} \label{appdx:vicon}

We first start by making the connection in a simpler case -- when the prior of 
our Bayesian model is given by a Gaussian distribution with variance $\gamma$, 
that is $p(\rvtheta)=\gN(\rvtheta\vert  \bm{0}, \gamma \I)$.

\begin{repobservation}{obs1}
    When $p(\rvtheta)=\gN(\rvtheta\vert  \bm{0}, \gamma \I)$, it follows that 
    the N-SFP objective in Equation \ref{eq:CSFP} corresponds to the negative 
    ELBO of the model:

    \begin{align}
        d\rvTheta_t &= \sqrt{\gamma} d\rvw_t, \;\;\; \rvTheta_0 \sim \delta_0,  \nonumber \\
        \rvx_i &\sim p(\rvx_i \vert   \rvTheta_1) . \label{eq:model_simple}
    \end{align}
\end{repobservation}

\begin{proof}
    Substituting $p(\rvtheta)$ into Equation \ref{eq:CSFP} yields

    \begin{align}
      \rvu^{*}=  \argmin_{\rvu \in \gU}\E_{\rvTheta \sim \Q^{0, \delta_0}}\left[\frac{1}{2\gamma}\int_0^1\left\|\rvu(t, \rvTheta_t)\right\|^2 dt - \ln p(\mX \vert   \rvTheta_1)\right] \label{eq:CSFP_2}.
    \end{align}

    Then, from \citep{boue1998variational,tzen2019neural,tzen2019theoretical} we 
    know that the term $\E\left[\int_0^1\left\|\vu_t\right\|^2 dt - \ln p(\mX \vert   \rvTheta_1)\right]$ is the negative ELBO of the model specified in Equation \ref{eq:model_simple}. 
\end{proof}

While the above observation highlights a specific connection between N-SFP and 
traditional VBI (Variational Bayesian Inference), it is limited to Bayesian models 
that are specified with Gaussian priors. To extend the result, we take inspiration 
from the recursive nature of Bayesian updates in the following result.








\begin{lemma}\label{lem:vi}
    The SBP $\;\inf_{\Q \in \calD\left(\delta_0,\; p(\rvtheta\vert   \mX)\right)} \KL\left(\Q \big\vert  \big\vert   \sS\right)$
    with reference process $\sS$ described by
    \begin{align}
        \rvTheta_0 &\sim \delta_{0} \\
        d\rvTheta_t\!=\!\nabla \ln Q^\gamma_{1-t}&\left[\frac{p(\rvTheta_t)}{\gN(\rvTheta_t\vert  \bm{0}, \gamma \I)}\right] +\! \sqrt{\gamma}  d\rvw_t,  
         \label{eq:prior_foll}
    \end{align}
    corresponds to maximising the ELBO of the model:
    \begin{align}
    \rvTheta_0 &\sim \delta_{0}\nonumber, \\
      \;\; d\rvTheta_t\!=\!\nabla \ln Q^\gamma_{1-t}&\left[\frac{p(\rvTheta_t)}{\gN(\rvTheta_t\vert  \bm{0}, \gamma \I)}\right] +\! \sqrt{\gamma}  d\rvw_t,  ,\nonumber\\
        \rvx_i &\sim p(\rvx_i \vert   \rvTheta_1) 
        . \label{eq:model_simple_2}
    \end{align}
\end{lemma}

\begin{proof}
    For brevity let $\rvu^0(t, \vtheta)=\nabla \ln Q^\gamma_{1-t}\left[\frac{p(\vtheta)}{\gN(\vtheta\vert  \bm{0}, \gamma \I)}\right] $. 
    First notice that the time-one marginals of $\sS$ are given by the 
    Bayesian prior:

    \begin{align*}
        (\rvTheta_1)_{\#} \sS = p(\vtheta) d\vtheta
    \end{align*}

    Now from \cite{leonard2012schrodinger,pavon2018data} we know that the Schrödinger 
    system is given by:

    \begin{align}
        \label{eq:SchrSys1}
        \phi_0(\rvtheta_0) \int p(\vtheta_0,0, \vtheta_1,1) \hat{\phi}_1(\vtheta_1)d\vtheta_1 &= \delta_0(\vtheta_0), \\
        \hat{\phi}_1(\rvtheta_1) \int p(\vtheta_0,0,\vtheta_1, 1){\phi}_0(\vtheta_0)d\vtheta_0 &= p(\vtheta_1 \vert   \mX),
    \end{align}

    where Equation \ref{eq:SchrSys1} can be given a rigorous meaning in weak form (that is, by integrating against suitable test functions).
    Notice $ \phi_0=\delta_0$ and thus it follows that

    \begin{align}
        \hat{\phi}_1(\vtheta)  = \frac{p(\vtheta \vert   \mX)}{p(0,0,\vtheta, 1)} = \frac{p(\vtheta \vert   \mX)}{p(\vtheta)}=\frac{p(\mX \vert   \vtheta)}{\gZ}.
    \end{align}

    By \cite{pavon1989stochastic,dai1991stochastic,pavon2018data} the optimal drift is given by:

    \begin{align}
        \rvu^*(t, \vtheta) = \gamma \nabla \ln \E[p(\mX \vert \rvTheta_1) \vert \rvTheta_t = \vtheta], 
    \end{align}

    where the expectation is taken with respect to the reference process $\sS$. 
    Now if we let $v(\vtheta,t) =-\ln \E[p(\mX \vert   \rvTheta_1) \vert   \rvTheta_t = \vtheta]$ 
    be our value function then via the linearisation of the Hamilton-Bellman-Jacobi 
    Equation through Fleming's logarithmic transform \citep{kappen2005linear,thijssen2015path,tzen2019theoretical}
    it follows that said value function satisfies:

    \begin{align}
        v(\vtheta, t) =  \min_{\rvu \in \gU}\E\left[\frac{1}{2\gamma}\int_t^1\left\|\rvu(t, \rvTheta_t) - \rvu^0(t, \rvTheta_t)\right\|^2 dt - \ln p(\mX \vert   \rvTheta_1)\Big\vert   \rvTheta_t = \vtheta\right],
    \end{align}

    and thus $\rvu^*(t, \vtheta) = \gamma \nabla \ln \E[p(\mX \vert   \rvTheta_1) \vert   \rvTheta_t = \vtheta]$ is a minimiser to:

    \begin{align}
        \rvu^{*}=  \argmin_{\rvu \in \gU}\E\left[\frac{1}{2\gamma}\int_0^1\left\|\rvu(t, \rvTheta_t) -\rvu^0(t, \rvTheta_t)\right\|^2 dt - \ln p(\mX \vert   \rvTheta_1)\right] .
    \end{align}
\end{proof}

\section{Stochastic Variational Inference} \label{apdx:svi}

For a Bayesian model having the structure specified by~\eqref{eq:bayes_inf} the 
objective in~\eqref{eq:CSFP} can be written as follows:

\begin{align}
    \E_{\rvTheta \sim \Q^{\rvu, \delta_0}}&\left[\frac{1}{2\gamma}\int_0^1\left\|\rvu(t,\rvTheta_t)\right\|^2 dt - \ln \frac{p(\mX \vert \rvTheta_1)p(\rvTheta_1)}{\gN(\rvTheta_1 \vert \bm{0}, \gamma \I)}\right]  \nonumber \\
                                                                                                                     &= \E\left[\frac{1}{2\gamma}\int_0^1\left\|\rvu(t,\rvTheta_t)\right\|^2 dt - \ln\frac{p(\rvTheta_1)}{\gN(\rvTheta_1 \vert \bm{0}, \gamma \I)}\right] + \sum_{i=1}^N \E\left[\ln p(\rvx_i \vert \rvTheta_1)\right],
\end{align}

where the last term can be written as:

\begin{equation}
    \sum_{i=1}^N \E\left[\ln p(\rvx_i \vert   \rvTheta_1)\right] = \frac{N}{B} \E_{\rvx_{k_i} \sim \calD}\left[\sum_{i=1}^B\E\left[\ln p(\rvx_{k_i} \vert   \rvTheta_1)\right]\right]
\end{equation}

\noindent
That is, it is possible to obtain an unbiased estimate of the objective (and its gradients) by subsampling the data with random batches of size $B$ and using the scaling $\frac{N}{B}$. A version of the algorithm with Euler-Maruyama discretization of the SDE is given in Algorithm~\ref{alg:svi}.

\section{Decoupled Drift Results} \label{apdx:decoup}

First let us consider the setting where the local variables are fully independent, 
that is, $\vtheta_i\bigCI\vtheta_j$.

\begin{remark}
    The heat semigroup preserves fully factored (mean-field) distributions thus the Föllmer drift is decoupled.
\end{remark}

In this setting we can parametrise the dimensions of the drift which correspond 
to local variables in a decoupled manner, 
$[\rvu_{t}]_{\vtheta_i} =u^{\vtheta_i}(t, \vtheta_i, \vx_i)$.
This amortised parametrisation \citep{kingma2013auto} allows us to carry out 
gradient estimates using a mini-batch \citep{hoffman2013stochastic} rather than 
hold the whole state space in memory.

\begin{repremark}{remark:fail}
    The heat semigroup does not preserve conditional independence structure in the drift. That is, the optimal drift does not decouple and as a result depends on the full state space.
\end{repremark}

\begin{proof}
    Consider the following distribution:

    \begin{align}
        \gN(x\vert z,0)\gN(y\vert  z,0)\gN(z\vert  0,1)  
    \end{align}

    We want to estimate:

    \begin{align}
        \E\left[\frac{\gN(X+x\vert  Z+z,1)  \gN(Y+y\vert  Z+z,1)\gN(Z+z\vert  1,0)}{\gN(X+x\vert  0,1)  \gN(Y+y\vert  0,1)\gN(Z+z\vert  0,1)}  \right],
    \end{align}

    where $X,Y,Z \sim \gN(0, \sqrt{1-t})$. From 

    \begin{align}
        \E\left[\frac{\gN(X+x\vert  Z+z,1)  \gN(Y+y\vert  Z+z,1)}{\gN(X+x\vert  0,1)  \gN(Y+y\vert  0,1) }\right]
    \end{align}

    we can easily see that the above no longer has conditional independence structure and thus when taking its logarithmic derivative the drift does not decouple.
\end{proof}

\begin{repremark}{rem:example}
    An SDE parametrised with a decoupled drift $[\rvu_{t}]_{\vtheta_i} =u(t, \vtheta_i,\Phi,\vx_i)$ can reach transition densities which do not factor.
\end{repremark}

\begin{proof}
    Consider the linear time-homogeneous SDE:

    \begin{align}
        d \rvTheta_t =  \mA \rvTheta_t dt + \gamma d \mW_t, \;\; \rvTheta_0 = 0,
    \end{align}

    where:

    \begin{align}
         [\mA]_{ij} = \delta_{ij} + i\delta_{1j},
    \end{align}

    then this SDE admits a closed form solution:

    \begin{align}
        \rvTheta_t = \gamma \int_0^t \exp\left(\mA (t-s)\right) d\mW_s,
    \end{align}

    which is a Gauss-Markov process with 0 mean and covariance matrix:

    \begin{align}
        \bm{\Sigma}(t)=\gamma^2 \int_0^t \exp\left(\mA (t-s)\right) \exp\left(\mA (t-s)\right)^\top ds
    \end{align}

    We can carry out the matrix exponential through the eigendecomposition of $\mA$, for simplicity let us consider the 3-dimensional case:

    \begin{align}
        \exp\left(\mA (t-s)\right) = S e^{D(t-s)} S^{-1} = \begin{pmatrix}
        0& 1& 1& \\
        1& 0& 2\\
        0& 0& 2
        \end{pmatrix}  \begin{pmatrix}
        e^{t-s}& 0& 0& \\
        0& e^{t-s}& 0\\
        0& 0& e^{3(t-s)}
        \end{pmatrix} \begin{pmatrix}
        0& 1& -1& \\
        1& 0& -1/2\\
        0& 0& 1/2
        \end{pmatrix}
    \end{align}

    From this we see that:

    \begin{align}
        \exp\left(\mA (t-s)\right) \exp\left(\mA (t-s)\right)^{\top} &= S e^{D(t-s)} S^{-1} (S e^{D(t-s)} S^{-1})^{\top}\\
        &= S e^{D(t-s)} S^{-1}  S^{-\top} e^{D(t-s)} S^\top \\
        &= \frac{1}{4} S e^{D(t-s)} \begin{pmatrix}
        8 & 2 & -2 \\
        2 & 5 & -1 \\
        -2 &-1 & 1 \\
        \end{pmatrix}  e^{D(t-s)} S^ \top \\
         &= \frac{1}{4} S \begin{pmatrix}
        8 e^{2(t-s)} & 2e^{2(t-s)} & -2e^{4(t-s)} \\
        2e^{2(t-s)} & 5e^{2(t-s)} & -e^{4(t-s)} \\
        -2e^{4(t-s)} &-e^{4(t-s)} & e^{6(t-s)} \\
        \end{pmatrix}  S^ \top
    \end{align}

    Integrating wrt to $s$ yields:

    \begin{align}
        \int \exp\left(\mA (t-s)\right) \exp\left(\mA (t-s)\right)^{\top} ds &= \frac{1}{4} S \begin{pmatrix}
        4  & 1 & -\frac{1}{2} \\
         1 & \frac{5}{2} & -\frac{1}{4} \\
        -\frac{1}{2} &-\frac{1}{4} & \frac{1}{6} \\
        \end{pmatrix}  S^ \top \\
        = \frac{1}{24} \begin{pmatrix}
        13& 2 & -1 \\
         2 & 16 & -2\\
        -1& -2 & 4 \\
        \end{pmatrix} .
    \end{align}

    The covariance matrix is dense at all times and thus the density $\law (\rvTheta_t) = \gN(\bm{\mu}(t), \bm{\Sigma}(t))$ does not factor (is a fully joint distribution). This example motivates that even with the decoupled drift we can reach coupled distributions.


\end{proof}

\section{Low Variance Estimators and Sticking the Landing} \label{appdx:stl}

\begin{reptheorem}{thrm:stl}
    The STL estimator proposed in \citep{xu2021infinitely} satisfies

    \begin{align}
        \frac{\mathrm{d}}{\mathrm{d}\varepsilon}  \mathcal{F}(\rvu^* + \varepsilon \vphi) \Big\vert_{\varepsilon = 0} = 0,
    \end{align}

    almost surely, for all smooth and bounded perturbations $\vphi$.
\end{reptheorem}

\begin{proof}
    Let us decompose $\gF$ in the following way:

    \begin{align}
        \gF(\rvu) = \gF_0(\rvu) + \gF_1(\rvu)
    \end{align}

    where (denoting the terminal cost with $g$):

    \begin{align}
        \gF_0(\rvu) &= \frac{1}{2\gamma}\int_0^1\left\|\rvu(t,\rvTheta_t)\right\|^2 dt + g(\rvTheta_1) \\
        \gF_1(\rvu) &= \frac{1}{\sqrt{\gamma} }\int_0^1\rvu^{\perp}(t,\rvTheta_t)^\top d\rvw_t
    \end{align}

    Denoting $\rvTheta^\rvu \sim \Q^{\rvu, \delta_0}$, from \cite{nusken2021solving}, 
    Theorem 5.3.1, Equation 133 it follows that:

    \begin{align}
        \frac{\mathrm{d}} {\mathrm{d}\varepsilon} \mathcal{F}_0(\rvu^* + \varepsilon \vphi) \Bigg\vert  _{\varepsilon = 0} = -\frac{1}{\sqrt{\gamma} }\int_0^1 \mA_t \cdot(\nabla \rvu_t^*)(\rvTheta_t^{\rvu^*}) \,\mathrm{d}\rvw_t,
    \end{align}

    almost surely, where $\mA_t$ is defined as

    \begin{align}
        \mA^\phi_t = \frac{\mathrm{d}\rvTheta_t^{\rvu^* + \varepsilon \vphi} }{\mathrm{d}\varepsilon} \Bigg\vert  _{\varepsilon = 0}
    \end{align}

    and satisfies:

    \begin{align}
        \mathrm{d}\mA_t^\phi = \phi_t(\rvTheta_t^{\rvu^*}) \, \mathrm{d}t + (\nabla \rvu^*)^\top(\rvTheta_t^{\rvu^*}) \mA_t^\phi \, \mathrm{d}t, \qquad \mA_0^\phi = 0.
    \end{align}

    Similarly via the chain rule it follows that:

    \begin{align}
        \frac{\mathrm{d}} {\mathrm{d}\varepsilon} \mathcal{F}_1(\rvu^* + \varepsilon \vphi) \Bigg\vert  _{\varepsilon = 0} =\frac{\mathrm{d}}{\mathrm{d}\varepsilon} \left( \frac{1}{\sqrt{\gamma} }\int_0^1 \rvu_t^*(\rvTheta_t^{\rvu^* + \varepsilon \vphi})^\top \mathrm{d}\rvw_t \right) \Bigg\vert  _{\varepsilon = 0} = \frac{1}{\sqrt{\gamma} }\int_0^1 \mA_t^\phi \cdot (\nabla \rvu_t^*)(\rvTheta_t^{\rvu^*}) \mathrm{d}\rvw_t
    \end{align}

    almost surely, combining these results we can see that $ \frac{\mathrm{d}}{\mathrm{d}\varepsilon}  \mathcal{F}(\rvu^* + \varepsilon \vphi) \Big\vert  _{\varepsilon = 0} =0$ almost surely as required.
\end{proof}

\section{Stabilising MC-SFS Implementation} \label{apdx:sfs}

We found the estimators proposed in \cite{huang2021schrodinger} (Equations 2.20 or 2.21, and  Algorithm 2 in \cite{huang2021schrodinger}) to be very numerically unstable. Even in two dimensions the montecarlo estimator of the drift evaluated to nans and infs on more than 50\% of the generated samples. This is due to the RND $f$ of Equation \ref{eq:sfs_est} often evaluating to either $0$ due to underflow or a very small number resulting in Equation \ref{eq:sfs_est} becoming very large and unstable.

In order to alleviate this we propose the a novel modified logsmexp reformulation of Equation \ref{eq:sfs_est}:

\begin{lemma} (Stable MC-SFS) \label{lem:stabilizer}
    The MC-SFS estimator

    \begin{align}
         \hat{\rvu}^*(t, \vx) = \frac{\E_{\vz \sim \hat{P}} [\vz_s f(\vx + \sqrt{1-t} \vz)]}{\E_{\vz \sim \hat{P}} {[\sqrt{1-t}} f(\vx + \sqrt{1-t} \vz)]},
    \end{align}

    Where $\hat{P}$ is the empirical measure:

    \begin{align}
        \hat{P} =\frac{1}{S} \sum_{s=1}^S \delta_{\vz_{s}}
    \end{align}

    Can be re-expresssed as:

    \begin{align}
        \hat{\rvu}^*(t, \vx) &= \exp\left(\logsumexp_s  g^+_{\vx}(\vz_s) - \logsumexp_s \ln \gZ_s\right) \\
                             &- \exp\left(\logsumexp_s  g^-_{\vx}(\vz)-   \logsumexp_s \ln \gZ_s\right)
    \end{align}

    where:

    \begin{align}
          g^+_{\vx}(\vz_s) & =   \begin{cases}
    \ln  \vz_s f(\vx + \sqrt{1-t} \vz_s) & \mathrm{if}\; \vz_s > 0\\
       0 & \mathrm{otherwise}
        \end{cases}\\
          g^-_{\vx}(\vz_s)  &=   \begin{cases}
    \ln  \vz_s f(\vx + \sqrt{1-t} \vz_s)    & \mathrm{if}\; \vz_s < 0\\
       0 & \mathrm{otherwise}
        \end{cases}
    \end{align}

    and $\ln \gZ_s = \ln\sqrt{1-t} + \ln f(\vx + \sqrt{1-t} \vz_s) $

\end{lemma}

\begin{proof}
    Firstly notice that the logsumexp formula cannot be applied to the numerator 
    as the terms $\vz_s f(\vx + \sqrt{1-t} \vz_s)$ in the numerator can take on 
    negative values and thus we cannot take the log. 

    In order to take log the note that $\E_{\hat{P}}[f]$ is a Lebesgue–Stieltjes 
    integral and thus by construction we can decompose it into positive and 
    negative parts:

    \begin{align}
         \hat{\rvu}_t^*(\vx) = \frac{\E_{\vz \sim \hat{P}} [(\vz_s f(\vx + \sqrt{1-t} \vz))]}{\E_{\vz \sim \hat{P}} {[\sqrt{1-t}} f(\vx + \sqrt{1-t} \vz)]} =\frac{\E_{\vz \sim \hat{P}} [(\vz_s f(\vx + \sqrt{1-t} \vz))^+]}{\E_{\vz \sim \hat{P}} {[\sqrt{1-t}} f(\vx + \sqrt{1-t} \vz)]}-  \frac{\E_{\vz \sim \hat{P}} [(\vz_s f(\vx + \sqrt{1-t} \vz))^-]}{\E_{\vz \sim \hat{P}} {[\sqrt{1-t}} f(\vx + \sqrt{1-t} \vz)]} 
    \end{align}

    wlog consider the first term:

    \begin{align}
        \frac{\E_{\vz \sim \hat{P}} [(\vz_s f(\vx + \sqrt{1-t} \vz))^+]}{\E_{\vz \sim \hat{P}} {[\sqrt{1-t}} f(\vx + \sqrt{1-t} \vz)]} = \exp\left(\log \E_{\vz \sim \hat{P}} [(\vz_s f(\vx + \sqrt{1-t} \vz))^+] - \ln E_{\vz \sim \hat{P} }[\sqrt{1-t} f(\vx + \sqrt{1-t} \vz)]\right)
    \end{align}

    and similarly for the second, at this point we can trivially apply the log 
    sum exp formula to each of the exponents separately as their integrands 
    are positive.
\end{proof}

For efficient implementation we first separate the samples into positive and negative  and then proceed to compute each of the $g^+$ and $g^-$ terms separately which avoids evaluating any $\log 0$ terms. We found this formula to have no numerical instabilities in our experiments ranging up to high dimensional cases $d=2^{12}$ without issue.

\section{Sensitivity of hyperparameters to Hypespectral Unmixing Results} \label{apdx:sens}

While we were able to find step size schedules for SGLD that would work well
for the Hyperspectral image data, it is important to note that it was due to
heavy tuning and a stroke of luck. As shown in~\ref{tab:hype} there are four parameters
to adjust for the step size scheduling of SGLD and the resulting performance is
very sensitive to all of them. To illustrate this, we fixed the parameters associated 
to $\sigma^2$ as given in~\ref{tab:hype}, and varied the others. The resulting
samples are provided in figure~\ref{fig:sgld_sensitivity}.

In contrast, N-SFS has only one tunable parameter, which impacts the results much
less, as shown in figures~\ref{fig:nsfs_sensitivity} and~\ref{fig:nsfs_decoupled_sensitivity}.

\begin{figure}
    \centering
    \includegraphics[width=\textwidth]{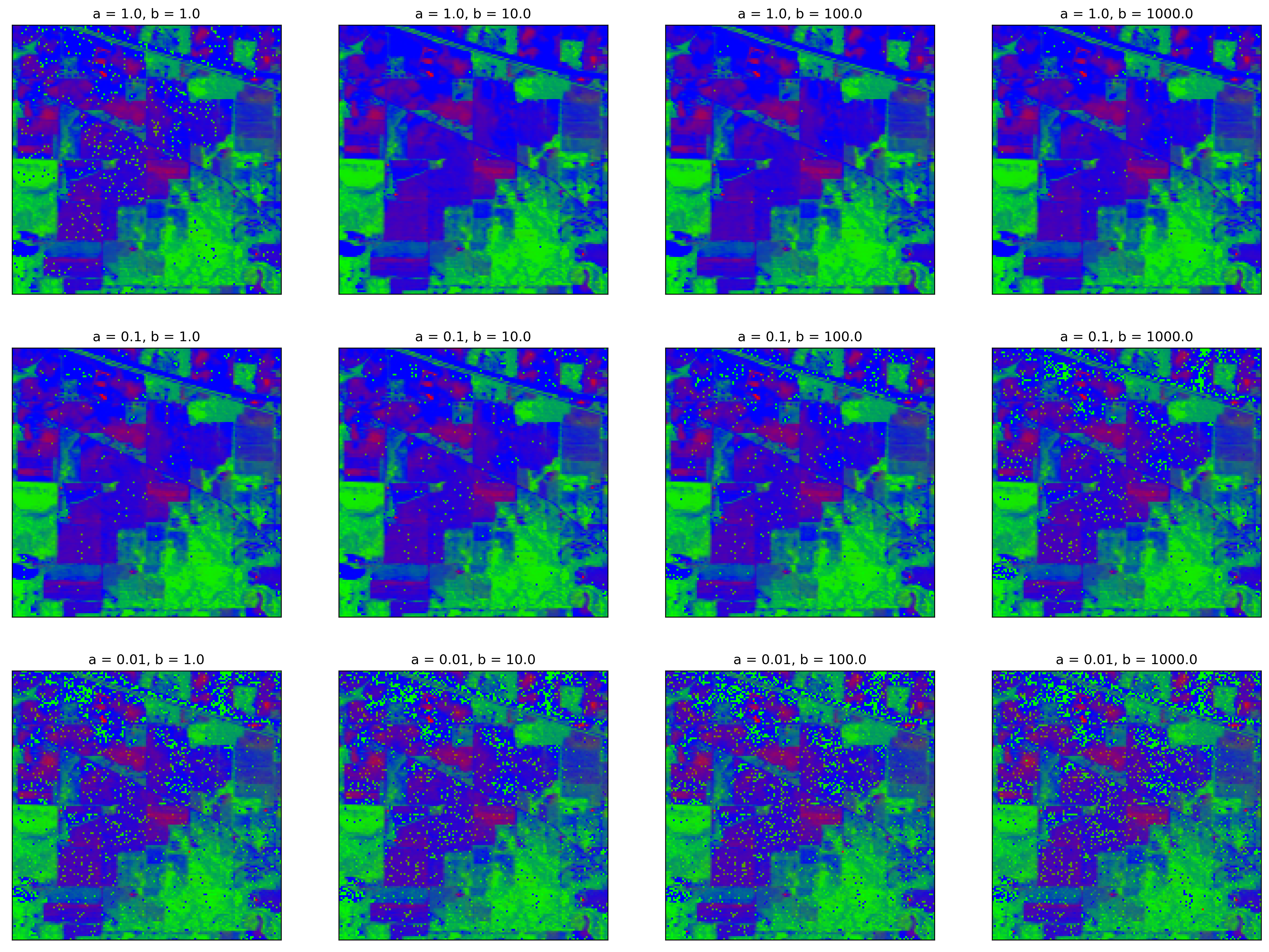}
    \caption{SGLD sensitivity to step size scheduling}\label{fig:sgld_sensitivity}
\end{figure}

\begin{figure}
    \centering
    \includegraphics[width=\textwidth]{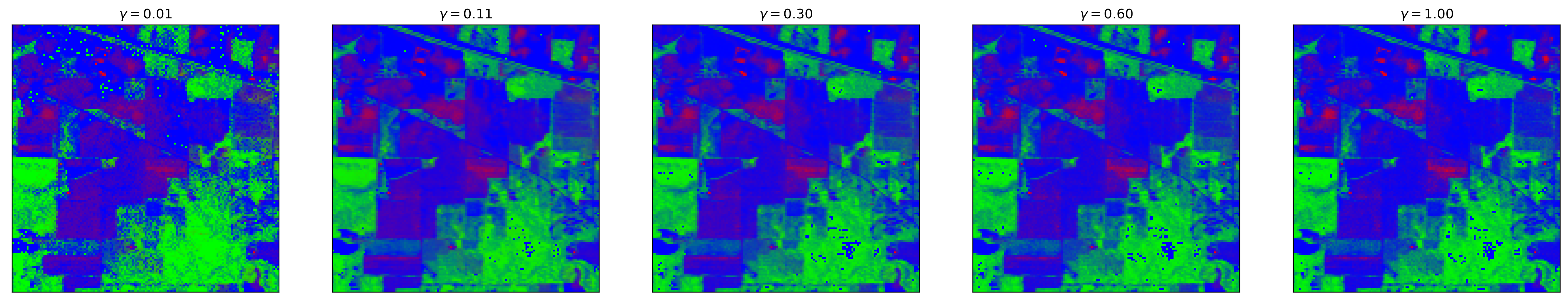}
    \caption{N-SFS sensitivity to $\gamma$}\label{fig:nsfs_sensitivity}
\end{figure}

\begin{figure}
    \centering
    \includegraphics[width=\textwidth]{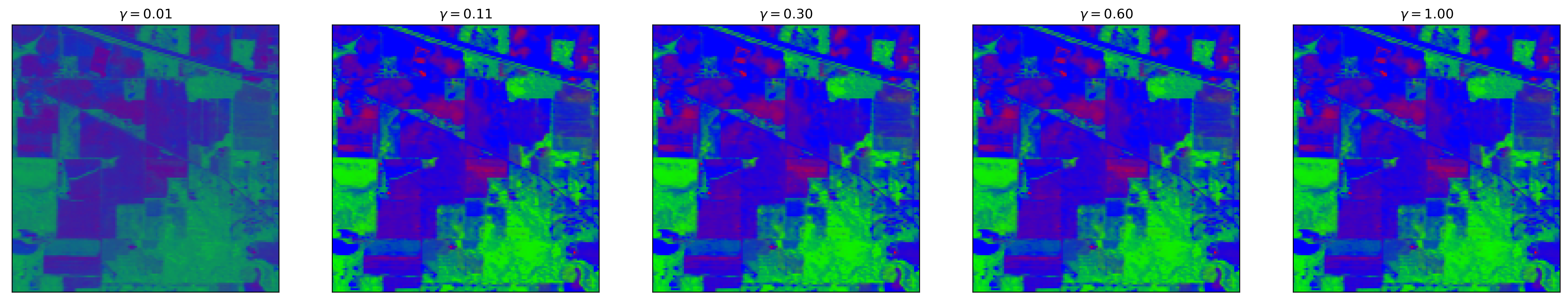}
    \caption{Decoupled N-SFS sensitivity to $\gamma$}\label{fig:nsfs_decoupled_sensitivity}
\end{figure}

\section{Experimental Details and Further Results} \label{apdx:exp}

\subsection{Method Hyperparameters}

In Table \ref{tab:hype} we show the experimental configuration of the trialled 
algorithms across all datasets. For the selected values of $\gamma$ we ran a 
small grid search $\gamma \in \{0.5^2, 0.2^2, 0.1^2, 0.05^2, 0.01^2\}$ and selected 
the $\gamma$ with best training set results.

\subsection{Step Function Dataset}

Here we describe in detail how the step function dataset was generated:

\begin{align}
    y(x) = \mathbbm{1}_{x \geq 0} + \epsilon, \quad \epsilon \sim \gN(0, 0.1)
\end{align}

Where:
\begin{itemize}
    \item  $\sigma_y=0.1$
    \item $N_{\mathrm{train}} = 100$, $N_{\mathrm{test}} = 100$
    \item $x_{\mathrm{train}} \in (-3.5, 3.5)$
    \item $x_{\mathrm{test}} \in (-10, 10)$
\end{itemize}

\begin{landscape}
\begin{table}[]

\caption{Hyper parameter configuration for methods and optimisers.}

\adjustbox{max width=\linewidth}{
    \begin{tabular}{cccccccc}
        \hline
        \multirow{2}{*}{Method} & \multirow{2}{*}{Hyperparameters}      & \multicolumn{6}{l}{Experiments}                                                                                                                                                                                                               \\ \cline{3-8} 
                                &                                       & Step Function                         & MNIST                                 & CIFAR10                               & Hyperspectral Unmixing                & LogReg                                & ICA                                   \\ \hline
        \multirow{9}{*}{N-SFS}  & Optimser                              & Adam                                  & Adam                                  & Adam                                  & Adam / Adam                                  & Adam                                  & Adam                                  \\
                                & Optimiser step size                   & $10^{-4}$                             & $10^{-5}$                             & $10^{-5}$                             & $10^{-5}$                             & $10^{-4}$                             & $10^{-4}$                             \\
                                & $\rvTheta$ batch size                 & $32$                                  & $32$                                  & $32$                                  & 32                                  & $32$                                  & $32$                                  \\
                                & Data batch size                       & $32$                                  & $50$                                  & $50$                                  & Whole dataset                        & Whole train set                       & $10$                                  \\
                                & \# of iterations                      & $300$                                 & $18750$                               & $18750$                               & 2000                                  & $300$                                 & $2832$                                \\
                                & \# of posterior samples               & 100                                   & 100                                   & 100                                   & 20                                      & 100                                   & 100                                   \\
                                & $\gamma$                              & $0.05^2$                              & $0.1^2$                               & $0.05^2$                              & $0.2^2$                                     & $0.2^2$                               & $0.01^2$                              \\
                                & EM train  $\Delta t_{\mathrm{train}}$ & $0.05$                                & $0.05$                                & $0.05$                                & 0.05                                   & $0.05$                                & $0.05$                                \\
                                & EM test  $\Delta t_{\mathrm{test}}$   & $0.01$                                & $0.01$                                & $0.01$                                & 0.01                                      & $0.01$                                & $0.01$                                \\ \cline{2-8} 
        \multirow{7}{*}{SGLD}   & Adaptive step schedule                & $\lambda(i) = \frac{a}{(i+b)^\gamma}$ & $\lambda(i) = \frac{a}{(i+b)^\gamma}$ & $\lambda(i) = \frac{a}{(i+b)^\gamma}$ & $\lambda_{\sigma^2}(i) = \frac{a_{\sigma^2}}{(i+b_{\sigma^2})^\gamma}, \lambda_A(i) = \frac{a_A}{(i+b_A)^\gamma}$ & $\lambda(i) = \frac{a}{(i+b)^\gamma}$ & $\lambda(i) = \frac{a}{(i+b)^\gamma}$ \\
                                & $a$                                   & $10^{-3}$                             & $7\times10^{-5}$                              & $10^{-4}$                             & $a_A = 1.0, a_{\sigma^2} = 10^{-6}$                                     & $10^{-4}$                             & $10^{-4}$                             \\
                                & $b$                                   & $10$                                  & $1$                                   & $1$                                   & $b_A = 10.0, b_{\sigma^2} = 1.0$                                      & $1$                                   & $1$                                   \\
                                & $\gamma$                              & $0.55$                                & $0.55$                                & $0.55$                                & 0.55                                      & $0.55$                                & $0.55$                                \\
                                & Posterior Samples                     & $100$                                 & $100$                                 & $100$                                 & 20                                      & $100$                                 & $100$                                 \\
                                & Data batch size                       & $32$                                  & $32$                                  & $32$                                  & Whole dataset                                     & $32$                                  & $32$                                  \\
                                & \# of iterations                      & $300$                                 & $18750$                               & $18750$                               & 10000                                      & $300$                                 & $2832$                                \\ \cline{2-8} 
        \multirow{3}{*}{SGD}    & step size                             & $10^{-2}$                             & $10^{-1}$                             & $10^{-3}$                             & -                                     & -                                     & -                                     \\
                                & Data batch size                       & $32$                                  & $32$                                  & $32$                                  & -                                     & -                                     & -                                     \\
                                & \# of iterations                      & $300$                                 & $18750$                               & $18750$                               & -                                     & -                                     & -                                     \\ \cline{2-8} 
    \end{tabular}
}

 \label{tab:hype}
\end{table}
\end{landscape}

\subsection{Föllmer Drift Architecture}

Across all experiments (with the exception of the MNIST dataset) we used the same architecture to parametrise the Föllmer drift:

\begin{lstlisting}[language=Python, caption=Simple architecture for drift.] 
class SimpleForwardNetBN(torch.nn.Module):

    def __init__(self, input_dim=1, width=20):
        super(SimpleForwardNetBN, self).__init__()

        self.input_dim = input_dim
        
        self.nn = torch.nn.Sequential(
            torch.nn.Linear(input_dim + 1, width), 
            torch.nn.BatchNorm1d(width, affine=False),
            torch.nn.Softplus(),
            torch.nn.Linear(width, width),
            torch.nn.BatchNorm1d(width, affine=False),
            torch.nn.Softplus(),
            torch.nn.Linear(width, width),
            torch.nn.BatchNorm1d(width, affine=False), 
            torch.nn.Softplus(),
            torch.nn.Linear(width, width),
            torch.nn.BatchNorm1d(width, affine=False),
            torch.nn.Softplus(),
            torch.nn.Linear(width, input_dim)
        )
        
        self.nn[-1].weight.data.fill_(0.0)
        self.nn[-1].bias.data.fill_(0.0)
\end{lstlisting}

Note the weights and biases of the final layer are initialised to $0$ in order to start the process at a Brownian motion matching the SBP prior.

For the MNIST dataset we used the score network proposed in \cite{chen2021likelihood}. We aimed in using this same architecture for the CIFAR10 experiments however we were unable to train it stably.

For Hyperspectral Unmixing dataset we used this architecture for N-SFS with full drift,
but had to devise a different architecture for decoupled drifts, as shown below.

\begin{lstlisting}[language=Python, caption=Score Network architecture for drift.] 
class ResNetScoreNetwork(torch.nn.Module):

    def __init__(self, input_dim: int, final_zero=False):
        super().__init__()
        res_block_initial_widths = [300, 300, 300]
        res_block_final_widths = [300, 300, 300]
        res_block_inner_layers = [300, 300, 300]

        self.input_dim = input_dim

        self.temb_dim = 128

        # ResBlock Sequence
        res_layers = []
        initial_dim = input_dim
        for initial, final in zip(res_block_initial_widths, res_block_final_widths):
            res_layers.append(ResBlock(initial_dim, initial, final, res_block_inner_layers, torch.nn.Softplus()))
            initial_dim = initial + final
        self.res_sequence = torch.nn.Sequential(*res_layers)

        # Time FCBlock
        self.time_block = torch.nn.Sequential(torch.nn.Linear(self.temb_dim, self.temb_dim * 2), torch.nn.Softplus())

        # Final_block
        self.final_block = torch.nn.Sequential(torch.nn.Linear(self.temb_dim * 2 + initial_dim, input_dim))
\end{lstlisting}

\begin{lstlisting}[language=Python, caption=Decoupled Drift network for local parameters]
class DecoupledDrift(AbstractDrift):

    def __init__(self, global_dim=1, local_dim=1, data_dim=1, width=20):
        super(DecoupledDrift, self).__init__()

        self.global_dim = global_dim
        self.local_dim = local_dim
        self.data_dim = data_dim

        self.nn = torch.nn.Sequential(
            torch.nn.Linear(global_dim + local_dim + data_dim + 1, width), torch.nn.BatchNorm1d(width, affine=False), torch.nn.Softplus(),
            torch.nn.Linear(width, width), torch.nn.BatchNorm1d(width, affine=False), torch.nn.Softplus(),
            torch.nn.Linear(width, width), torch.nn.BatchNorm1d(width, affine=False), torch.nn.Softplus(),
            torch.nn.Linear(width, width), torch.nn.BatchNorm1d(width, affine=False), torch.nn.Softplus(),
            torch.nn.Linear(width, local_dim)
        )

        self.nn[-1].weight.data.fill_(0.0)
        self.nn[-1].bias.data.fill_(0.0)

\end{lstlisting}

\subsection{BNN Architectures} \label{apdx:bnn}

For the step function dataset we used the following architecture:

\begin{lstlisting}[language=Python, caption=Architecture for step function dataset.] 
class DNN_StepFunction(torch.nn.Module):
    
    def __init__(self, input_dim=1, output_dim=1):
        super(DNN, self).__init__()
        
        self.output_dim = output_dim
        self.input_dim = input_dim
        
        self.nn = torch.nn.Sequential(
            torch.nn.Linear(input_dim, 100),
            torch.nn.ReLU(),
            torch.nn.Linear(100, 100),
            torch.nn.ReLU(),
            torch.nn.Linear(100, output_dim)
        )
\end{lstlisting}

For LeNet5 the architecture used was:

\begin{lstlisting}[language=Python, caption=Architecture for MNIST.] 
class LeNet5(torch.nn.Module):

    def __init__(self, n_classes):
        super(LeNet5, self).__init__()
        
        self.feature_extractor = torch.nn.Sequential(
            torch.nn.Conv2d(
                in_channels=1, out_channels=6,
                kernel_size=5, stride=1
            ),
            torch.nn.Tanh(),
            torch.nn.AvgPool2d(kernel_size=2),
            torch.nn.Conv2d(
                in_channels=6, out_channels=16,
                kernel_size=5, stride=1
            ),
            torch.nn.Tanh(),
            torch.nn.AvgPool2d(kernel_size=2),
        )

        self.classifier = torch.nn.Sequential(
            torch.nn.Linear(in_features=256, out_features=120),
            torch.nn.Tanh(),
            torch.nn.Linear(in_features=120, out_features=84),
            torch.nn.Tanh(),
            torch.nn.Linear(in_features=84, out_features=n_classes),
        )

\end{lstlisting}
The same layer structure as in LeNet5 was used for the CIFAR10 dataset,and  with a difference in the number of channels and size of filters. Exact details can be found in the code repository.

\subsection{Likelihood and Prior Hyperparameters}

In Table \ref{tab:hypmod} we describe the hyperparameters of each Bayesian model as well as their priors and likelihood.

\begin{table}[h!]
\centering
\begin{tabular}{@{}lll@{}}
\toprule
\multirow{2}{*}{Model}         & \multirow{2}{*}{Hyperparameters} & \multirow{2}{*}{Values}                                         \\
                               &                                  &                                                                 \\ \midrule
\multirow{4}{*}{Step Function} & Prior                            & $\gN(\bm{0}, \sigma_\theta^2\mathbb{I})$                        \\
                               & Likelihood                       & $\gN(\vy_i\vert  f_\vtheta(\vx_i), \sigma_y^2\mathbb{I})$             \\
                               & $\sigma_\theta$                  & $1$                                                             \\
                               & $\sigma_y$                       & $0.1$                                                           \\ \cmidrule(l){2-3} 
\multirow{3}{*}{MNIST}         & Prior                            & $\gN(\bm{0}, \sigma_\theta^2\mathbb{I})$                        \\
                               & Likelihood                        & $\mathrm{Cat}(f_\theta(\vx_i))$                               \\
                               & $\sigma_\theta$                  & $1$                                                             \\ \cmidrule(l){2-3}
\multirow{3}{*}{CIFAR10}         & Prior                            & $\gN(\bm{0}, \sigma_\theta^2\mathbb{I})$                        \\
                               & Likelihood                        & $\mathrm{Cat}(f_\theta(\vx_i))$                               \\
                               & $\sigma_\theta$                  & $1$                                                             \\ \cmidrule(l){2-3}
\multirow{3}{*}{Hyperspectral Unmixing}         & Prior                            & $p(\sigma^2) = \ind_{[0, 1]}(\sigma^2), p(a_p) = \ind_{\psimplex{R}}(\ba_p)$                        \\
                               & Likelihood                        & $\gN(\bM\ba_p; \vert  \vert  \ba_p\vert  \vert  ^2\sigma^2I)$                               \\ \cmidrule(l){2-3} 
\multirow{3}{*}{Log Reg}       & Prior                            & $\mathrm{Laplace}(\bm{0},\sigma_\theta,)$                       \\
                               & Likelihood                       & $\mathrm{Bern}(\mathrm{Sigmoid}_{\theta})$                       \\
                               & $\sigma_\theta$                  & $1$                                                             \\ \cmidrule(l){2-3} 
\multirow{3}{*}{ICA}           & Prior                            & $\gN(\bm{0}, \sigma_\theta^2\mathbb{I})$                        \\
                               & Likelihood                       & $\prod_i \frac{1}{4\cosh^2(\frac{\vtheta_i^{\top}\bm{x} }{2})}$ \\
                               & $\sigma_\theta$                  & 1                                                               \\ \bottomrule
\end{tabular} \label{tab:hypmod}
\caption{Specification of Bayesian models.}
\end{table}

\end{appendices}

\bibliography{sn-bibliography}




\end{document}